\newsavebox{\mintedbox}
\definecolor{kleinblue}{RGB}{0, 47, 167} 
\definecolor{kleinblue2}{RGB}{20, 20, 125} 
\definecolor{kleinred}{HTML}{bc1919}
\definecolor{cvprblue}{rgb}{0.21,0.49,0.74}
\definecolor{softgreen}{rgb}{0.23, 0.65, 0.23}
\definecolor{softred}{rgb}{0.8, 0.13, 0.13}
\definecolor{kleinred}{HTML}{bc1919}
\definecolor{kleinblue2}{RGB}{20, 20, 125} 
\theoremstyle{plain}
\newtheorem{theorem}{Theorem}[section]
\newtheorem{proposition}[theorem]{Proposition}
\theoremstyle{definition}
\theoremstyle{remark}
\newcommand*\circled[1]{%
\tikz[baseline=(char.base)]{
  \node[shape=circle, fill=kleinblue2, draw=white, thick, drop shadow={shadow xshift=0.2ex,shadow yshift=-0.2ex}, inner sep=1pt] (char) {\textcolor{white}{#1}};
}}
\newcommand*\redcircled[1]{%
\tikz[baseline=(char.base)]{
  \node[shape=circle, fill=softred, draw=white, thick, drop shadow={shadow xshift=0.2ex,shadow yshift=-0.2ex}, inner sep=1pt] (char) {\textcolor{white}{#1}};
}}
\newcommand{\ie}{\textit{i.e.}}
\newtheorem*{theorem*}{Theorem}
\title{Efficient Lifelong Model Evaluation \\in an Era of Rapid Progress}
\author{
 Ameya Prabhu\thanks{equal contribution, $\dagger$ equal supervising}$~~^{1,3}$  \quad Vishaal Udandarao$^{\textbf{*}1,2}$ \quad Philip H.S. Torr$^{3}$\\ \textbf{Matthias Bethge$^{1\dagger}$ \quad Adel Bibi$^{3\dagger}$} \quad \textbf{Samuel Albanie$^{2\dagger}$}  \\
\hspace{-0.35cm}$^1$T\"ubingen AI Center, University of T\"ubingen \quad 
$^2$University of Cambridge \quad 
$^3$University of Oxford
}
\begin{document}

\maketitle
\vspace{-1cm}
\begin{center}
    \begin{tabular}{c@{\hskip 19pt}c}
    \hspace*{0.5cm}\raisebox{-1pt}{\faGithub} \url{https://github.com/bethgelab/sort-and-search} & \\
    \hspace*{1cm}\raisebox{-1.5pt}{\faDatabase}\url{https://huggingface.co/datasets/bethgelab/lifelong_benchmarks} \\
\end{tabular}
\end{center}
\doparttoc %
\faketableofcontents %

\begin{abstract}
\vspace{-0.25cm}

Standardized benchmarks drive progress in machine learning. However, with repeated testing, the risk of overfitting grows as algorithms over-exploit benchmark idiosyncrasies. In our work, we seek to mitigate this challenge by compiling \textit{ever-expanding} large-scale benchmarks called \textit{Lifelong Benchmarks}. These benchmarks introduce a major challenge: the high cost of evaluating a growing number of models across very large sample sets. To address this challenge, we introduce an efficient framework for model evaluation, \textit{Sort \& Search (S\&S)}, which reuses previously evaluated models by leveraging dynamic programming algorithms to selectively rank and sub-select test samples. To test our approach at scale, we create \textit{Lifelong-CIFAR10} and \textit{Lifelong-ImageNet}, containing 1.69M and 1.98M test samples for classification. Extensive empirical evaluations across $\sim$31,000 models demonstrate that \textit{S\&S} achieves highly-efficient approximate accuracy measurement, reducing compute cost from 180 GPU days to 5 GPU hours ($\sim$1000x reduction) on a single A100 GPU, with low approximation error and memory cost of $<$100MB.  
Our work also highlights issues with current accuracy prediction metrics, suggesting a need to move towards sample-level evaluation metrics. We hope to guide future research by showing our method's bottleneck lies primarily in generalizing \textit{Sort} beyond a single rank order and not in improving \textit{Search}. 

\vspace{-0.5cm}
\end{abstract}

\begin{figure}[hb!]
    \centering
    \includegraphics[width=\textwidth]{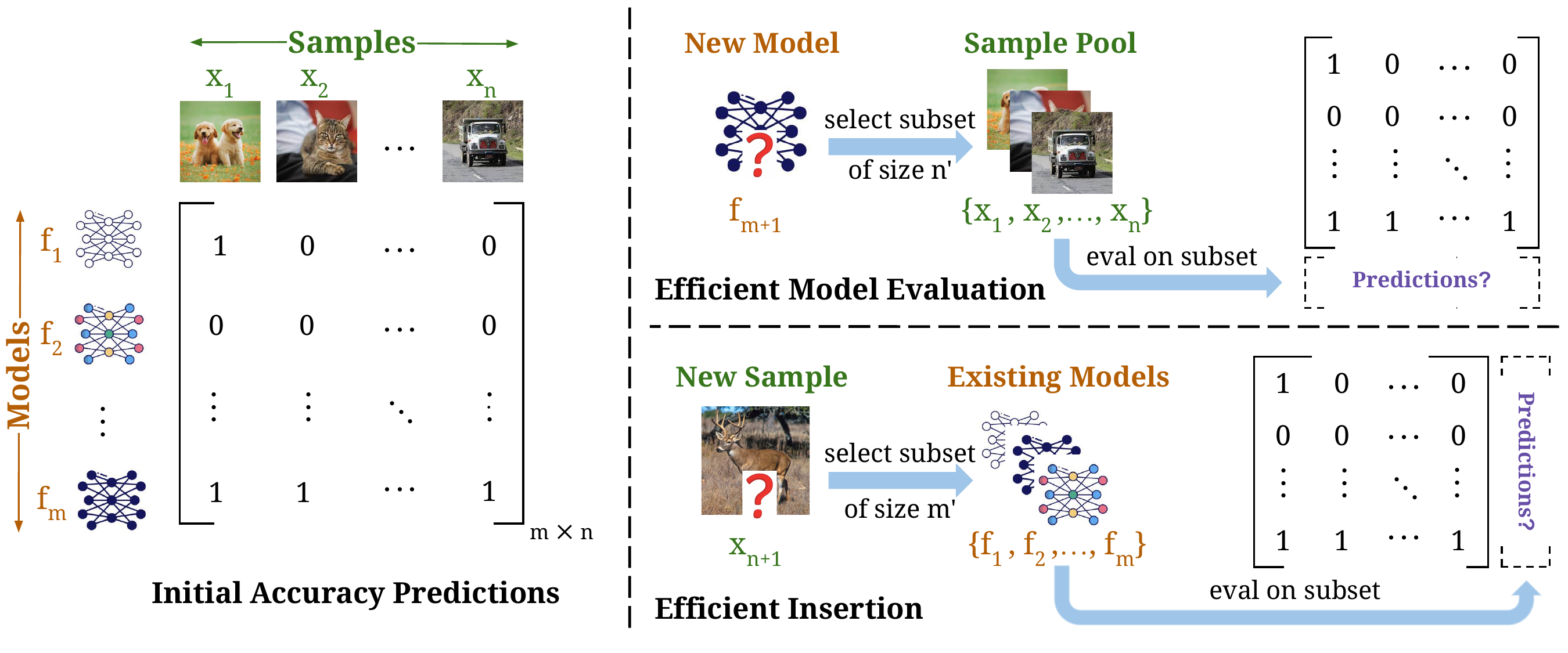}
    \vspace*{-10pt}
    \caption{\textbf{\textit{Efficient Lifelong Model Evaluation}.} Assume an initial pool of $n$ samples and $m$ models evaluated on these samples \textit{(left)}. Our goal is to efficiently evaluate a new model ($\text{\texttt{insert}}_{\mathcal{M}}$) at sub-linear cost \textit{(right top)} and efficiently insert a new sample into the lifelong benchmark ($\text{\texttt{insert}}_{\mathcal{D}}$) by determining sample difficulty at sub-linear cost \textit{(right bottom)}. See~\cref{sec:preliminaries} for more details.}
    \vspace*{-10pt}
    \label{fig:sec2overview}
\end{figure}

\section{Introduction}
\label{sec:intro}

The primary goal of standard evaluation benchmarks is to assess model performance on some task using data that is \textit{representative of the visual world}~\citep{torralba2011unbiased}. 
For instance, the CIFAR10~\cite{krizhevsky2009learning} benchmark tested whether classifiers can distinguish between 10 categories, such as dogs and cats. 
Subsequent versions like CIFAR10.1~\cite{lu2020harder}, CIFAR10.2~\cite{lu2020harder}, CINIC10~\cite{darlow2018cinic}, and CIFAR10-W~\cite{sun2023cifar} introduced more challenging and diverse samples to evaluate the \textit{same objective of classifying 10 categories}. As benchmarks become standardized and repeatedly used to evaluate competing methods, they gradually lose their capacity to represent broader tasks effectively. This is because models become increasingly specialized to perform well on these specific benchmarks. This phenomenon, known as overfitting, occurs both in individual models and within the research community as a whole~\cite{fang2023does,vishniakov2023convnet}. Fresh approaches must compete with a body of methods that have been highly tuned to such benchmarks, incentivising further overfitting if they are to compete~\citep{bender2021dangers,beyer2020we}. 

One approach to preventing models from overfitting to biases \cite{torralba2011unbiased, anonymous2024democratizing} is to move beyond fixed test sets by creating an ever-expanding pool of test samples. This approach, known as \textit{Lifelong Model Evaluation}, aims to restore the representativeness of benchmarks to reflect the diversity of the visual world by expanding the coverage of test sets. One can expand the pool by combining datasets or using well-studied techniques like dynamic sampling \citep{shirali2022theory, kiela2021dynabench,kossen2021active}, these expanding benchmarks can grow substantially in size as they accumulate samples. This raises the less-explored issue of increasing evaluation costs. As an example, it takes roughly 140 and 40 GPU days respectively to evaluate our current model set on our \textit{Lifelong-CIFAR10} and \textit{Lifelong-ImageNet} datasets (containing 31,000 and 167 models respectively).  These issues are only exacerbated in benchmarking foundation models~\cite{bommasani2021opportunities}. For instance, evaluating a single large language model (LLM) on MMLU~\cite{hendrycks2020measuring} (standard benchmark for evaluating LLMs) takes 24 hours on a consumer-grade GPU~\cite{llm-perf-leaderboard}. This inevitably will lead to a surge in evaluation costs when benchmarking lots of increasingly expensive models against an ever-growing collection of test samples~\cite{sardana2023beyond,dehghani2021efficiency}. Hence, we primarily ask: \textit{Can we reduce this evaluation cost while minimising the prediction error?}

We design algorithms to enable efficient evaluation in lifelong benchmarks, inspired by computerized adaptive testing (CAT) \cite{van2000computerized}. CAT is a method used to create exams like the GRE and SAT from a continuously growing pool of questions. Unlike traditional tests where all questions must be answered, CAT sub-samples questions based on examinee responses. This approach efficiently gauges proficiency with far fewer questions, while maintaining assessment accuracy. Similarly,  we aim to evaluate classification ability of new models without testing on all samples, instead selecting a subset of samples to evaluate models. We propose a method, \textit{Sort \& Search (S\&S)}, which reuses past model evaluations on a sample set through dynamic programming to enable efficient evaluation of new incoming models. \textit{S\&S} operates by first ranking test samples by their difficulty, done efficiently by leveraging data from previous tests. It then uses these updated rankings to evaluate new models, streamlining the benchmarking process.  This strategy enables efficient lifelong benchmarking, reducing the cost dramatically from a collective of 180 GPU days to 5 GPU hours on a single A100 GPU. We achieve a 1000$\times$ reduction in inference costs compared to static evaluation on all samples, reducing over 99.9\% of computation costs while accurately predicting sample-wise performance. Moreover, with a single algorithm, we address both key challenges: expanding dataset size and evaluating new models given a dataset.

Taken together, our main contributions are:
\begin{enumerate}[left=0pt]
\itemsep0em
    \item We curate two lifelong benchmarks: \textit{Lifelong-CIFAR10} and \textit{Lifelong-ImageNet}, consisting of 1.69M and 1.98M samples respectively.
    \item  We propose \textit{Sort \& Search}, a novel framework for efficient model evaluation.
    \item We show that our simple framework is far more scalable and allows saving 1000x evaluation cost.
    \item  We provide a novel decomposition of errors in \textit{Sort \& Search} into largely independent sub-components (aleatoric and epistemic errors).
    \item  We prove and empirically validate that our solution for the \textit{Search} sub-component reaches the optimal solution and our framework is stable under repeated additions without any degradation.
\end{enumerate}

\section{Lifelong Model Evaluation: Formulation and Challenges}
\label{sec:preliminaries}

We first formalise evaluation in lifelong model evaluation and describe the key challenges it raises.

\textbf{Formulation.} 
Let ${\mathcal{D}}{=}{((x_1, y_1), \dots, (x_n, y_n))}$ denote an ordered collection of labeled examples, sampled from the underlying task distribution of interest $P({\mathcal{X}} {\times} {\mathcal{Y}})$.
Here, ${x_i} {\in} {\mathcal{X}}$ denotes the $i^{\mathrm{th}}$ data sample and ${y_i} {\in} {\mathcal{Y}}$ denotes the corresponding label.
Let ${\mathcal{M}} {=} {(f_1, \dots, f_m)}$ denote an ordered collection of models where each model, ${f}{:} {\mathcal{X}} {\to} {\mathcal{Y}}$, maps data samples to predicted labels.
\textit{Lifelong benchmark}, \mbox{${\mathcal{B}} {=} {(\mathcal{D}, \mathcal{M}, \text{\texttt{insert}}_\mathcal{D}, \text{\texttt{insert}}_\mathcal{M}, \text{\texttt{metrics}}}$)}, augments $\mathcal{D}$ and $\mathcal{M}$ with three operations:\\\vspace{0.05cm}
\redcircled{1} \, $\text{\texttt{insert}}_{\mathcal{D}}((x', y'))$ inserts a new labeled example $(x', y')$ into $\mathcal{D}$.\\\vspace{0.05cm}
\redcircled{2} \, $\text{\texttt{insert}}_{\mathcal{M}}(f')$ inserts a new model $f'$ into $\mathcal{M}$.\\\vspace{0.05cm}
\redcircled{3} \, $\text{\texttt{metrics}}()$ returns a ${|}{\mathcal{M}}{|}$-dimensional vector estimating each model's performance.

\textbf{Key challenges.}
When new models are proposed, the set $\mathcal{M}$ expands over time. Similarly, the sample collection, $\mathcal{D}$ expands as new evaluation datasets get proposed to test various aspects of the problem and resist overfitting. The key question becomes: How to efficiently update the benchmark? We can instantiate a ``naive'' implementation of the $\text{\texttt{metrics}}()$ operation (\redcircled{3}) by simply re-evaluating every model on every sample after each call to $\text{\texttt{insert}}_\mathcal{M}$ (\redcircled{2})
or $\text{\texttt{insert}}_\mathcal{D}$ (\redcircled{1}).
However, such a strategy exhibits $O(|\mathcal{D}| |\mathcal{M}|)$ runtime complexity for each call to $\text{\texttt{metrics}}()$, rendering lifelong model evaluation practically infeasible as $\mathcal{D}$ and $\mathcal{M}$ grow. 
The central question considered by this work is therefore the following:
\textit{
Given a lifelong benchmark $\mathcal{B}$,
how can we efficiently compute} \texttt{metrics}()
\textit{each time we insert new labeled samples into $\mathcal{D}$ (\redcircled{1}) or new models into $\mathcal{M}$ (\redcircled{2})?
}

\color{black}

\textbf{Inserting $\Delta m$ models} (\redcircled{2} $\text{\texttt{insert}}_{\mathcal{M}}$).
Suppose that $\Delta m$ new models have just been released. We wish to insert these new models into $\mathcal{M}$ and efficiently predict performance of these new models. A naive approach would entail evaluating the $\Delta m$ models on all $|\mathcal{D}|$ samples. Our first challenge is: Can we instead generate the prediction matrix by performing inference only on a small subset of $n' \ll |\mathcal{D}|$ samples? We want to enable accurate prediction of the remaining entries in the prediction matrix.
 
\textbf{Inserting $\Delta n$ samples} (\redcircled{1} $\text{\texttt{insert}}_{\mathcal{D}}$). Our second challenge arises when we obtain new $\Delta n$ labeled data examples.
We seek to insert these samples into $\mathcal{D}$ and efficiently predict performance of these new samples. A naive approach entails evaluating all $|\mathcal{M}|$ models on the $\Delta n$ new examples.
As above, to substantially reduce cost, we select a small subset of $m' \ll |\mathcal{M}|$ models with the objective of accurately predicting the remaining entries of the prediction matrix corresponding to the new $\Delta n$ samples.

\textbf{Approach.}
Our approach is characterized by two key ideas. First, we augment $\mathcal{B}$ with an \textit{instance-level accuracy cache} to amortise inference costs across evaluations. The cache is instantiated as a matrix $\mathbf{A} \in \{0, 1\}^{|\mathcal{M}| \times |\mathcal{D}|}$ where $\mathbf{A}(i, j) \triangleq \mathbb{I}[f_i(x_j) = y_j]$. Second, we propose strategies to efficiently generate the prediction matrix $\mathbf{Y} \in \{0, 1\}^{|\mathcal{M}| \times |\mathcal{D}|}$, using a combination of sampling and inference leveraging the accuracy cache. Our methodology is illustrated in~\cref{fig:sec2overview}.

\noindent\textbf{Connections to Existing Literature.}\label{related-work-main-paper} The lifelong model evaluation setup, where $\mathcal{M}$ and $\mathcal{D}$ grow over time, has received limited attention \citep{anonymous2024democratizing}, the sub-challenge of efficiently evaluating models when new models are released has received more focus. Concretely, this maps to the problem of $\text{\texttt{insert}}_{\mathcal{M}}$ (\redcircled{2}) within our framework.
We comprehensively draw connections across different research directions in~\cref{extended-related-work} and briefly present the most similar works here. 
Model Spider \citep{zhang2023model} efficiently ranks models from a pre-trained model zoo.  
LOVM~\cite{zohar2023lovm}, Flash-Eval~\citep{zhao2024flasheval} and Flash-HELM~\cite{perlitz2023efficient} similarly rank foundation models efficiently on unseen datasets. 
However, these approaches predict dataset-level metrics rather than instance-level metrics, and thereby cannot be used in our setup to grow the prediction cache efficiently (see~\cref{why-dataset-level-is-bad}).
Concurrent to our work, Anchor Point Sampling~\cite{vivek2023anchor} and IRT-Clustering~\citep{polo2024tinybenchmarks} both propose efficient instance-level evaluations by creating smaller core-sets from test data. They introduce clustering-based approaches and item response theory~\citep{baker2001basics} to obtain sample-wise accuracy predictions. However, their methods require memory and time complexity quadratic in the number of data samples, i.e., $\mathcal{O}(|\mathcal{D}|^2)$  requiring well over 10TB of RAM for benchmarks having a million samples. The comparisons are infeasible to scale on datasets bigger than a few thousand samples. In contrast, our novel \textit{Sort \& Search} approach, requires memory and time complexity of $\mathcal{O}(|\mathcal{D}| \log |\mathcal{D}|)$ with the number of samples, and can scale up to billion-sized test sets (see~\cref{sec:results} for empirical results). In practice, our method only requiring only two 1D arrays of size of the number of samples, requiring extremely minimal storage overhead, being less than 3GB in absolute terms on billion scale datasets. Furthermore, we motivate why one should adopt sample-wise prediction instead of overall accuracy prediction below.

\subsection{Why Adopt Sample-wise Prediction Metrics instead of Overall Accuracy Prediction?}
\label{why-dataset-level-is-bad}
Given model predictions $\mathbf{y}_{m+1}$ and ground-truth predictions $\mathbf{a}_{m+1}$, current methods typically measures whether one can predict the average accuracy over the full test, measured by mean absolute difference of aggregate accuracies $E_{\text{agg}}(\mathbf{y}_{m+1},\mathbf{a}_{m+1}) = \nicefrac{\lvert \left(|\mathbf{y}_{m+1}| - |\mathbf{a}_{m+1}|\right)\rvert}{n}$. We argue this is highly unreliable as minimizing the metric only requires predicting the count of 1s in the prediction array rather than correctly predicting on a sample level. For instance, consider a ground-truth prediction array of [0,0,0,1,1,1]. A method that predicts [1,1,1,0,0,0] as the estimated prediction array achieves optimal $E_{\text{agg}}$ of 0 despite not predicting even a single sample prediction correctly!  More generally, 
 it is always possible to obtain globally optimal $E_{\text{agg}}$ of 0 while having worst-case mean-absolute error $E$ for any ground truth accuracy $a_{m+1}$. Formally,\vspace{0.1cm}

\begin{theorem}
    Given any ground-truth vector $\mathbf{a}_{m+1}$, it is possible to construct a prediction vector $\mathbf{y}_{m+1}$ such that $E_{\text{agg}}(\mathbf{y}_{m+1},\mathbf{a}_{m+1}) = 0$ and $E(\mathbf{a}_{m+1}, \mathbf{y}_{m+1}) = 2. \text{min}(1 - \nicefrac{|\mathbf{a}_{m+1}|}{n}, \nicefrac{ 
 |\mathbf{a}_{m+1}|}{n}$)
\end{theorem}

One might wonder whether these worst case bounds ever occur in practice. We empirically test a simple yet optimal array construction, given with oracle ground-truth dataset-level accuracy of $k$\footnote{Given $|\mathbf{a}_{m+1}| = k$, one can show the prediction array $[\mathbf{1}_k^\top, \mathbf{0}_{n-k}^\top]$ achieves optimal $E_{agg}=0$}, which achieves $E_{\text{agg}} = 0$, and consistently observe high mean-absolute error $E$ of ${0.4}{-}{0.5}$ on a sample level on our lifelong benchmarks (${n}{=}{{\sim}10^6}$), \textit{i.e.}, the model incorrectly predicts ${40}{-}{50}\%$ of the samples in a binary classification task, which is surprisingly high. In comparison, our \textit{S\&S} method, without any oracle access, gets ${0.15}{-}{0.17}$ mean-absolute error with just ${n'}{=}{100}$ samples (at $10,000$x compute saving) on the same benchmarks. Overall, this demonstrates that thoughtful sample-level prediction mechanisms are necessary for efficient lifelong evaluation.

\section{\textit{Sort {\&} Search}: Enabling Efficient Lifelong Model Evaluation}
\label{sec:method}

Inspired by CAT~\cite{van2000computerized}, we propose an efficient lifelong evaluation framework, \textit{Sort \& Search} \textit{(S\&S)}, comprising two components: (1) Ranking test samples from the entire dataset pool according to their difficulty\footnote{If a sample $x_i$ is more ``difficult" than a sample $x_j$ then at least equal number of models predict $x_j$ correctly as the number of models predicting $x_i$ correctly~\cite{baldock2021deep}.}, \ie, \textit{Sort} and (2) Sampling a subset from the pool to test on, \ie, \textit{Search}. 
This framework effectively tackles the two key operations noted in~\cref{sec:preliminaries} (\redcircled{1}$\text{\texttt{insert}}_{\mathcal{D}}$ and \redcircled{2}$\text{\texttt{insert}}_{\mathcal{M}}$). 

We first describe our Sort and Search method in the case when new models are added (\redcircled{2}$\text{\texttt{insert}}_{\mathcal{M}}$), and subsequently show that the same procedure applies when we have new incoming samples (\redcircled{1}$\text{\texttt{insert}}_{\mathcal{D}}$) simply by transposing the cache ($\mathbf{A} \rightarrow \mathbf{A}^T$). A full schematic of our pipeline is depicted in~\cref{fig:sspipeline}.

\subsection{Ranking by Sort}
\label{sortsec}

\textbf{Setup.} We recall that our lifelong benchmark pool consists of evaluations of $|\mathcal{M}|$ models on $|\mathcal{D}|$ samples. For ease of reference, say ${|\mathcal{M}|}{=}{m}$ and ${|\mathcal{D}|}{=}{n}$, and we have our cache $\mathbf{A}\in\{0,1\}^{m\times n}$  (see \cref{fig:sec2overview} left). We can decompose the cache $\mathbf{A}$ row-wise corresponding to each model $f_i$, $i \in \{1,..,m\}$, obtaining the binary accuracy prediction across the $n$ samples, denoted by $\mathbf{a}_i=[p_{i1},p_{i2}\dots,p_{in}]$. Here, ${p_{ij}}{\in}{\{0,1\}}$ represents whether the model $f_i$ classified the sample $x_j$ correctly. 

\textbf{Goal.} Given the cache $\mathbf{A}$, we want to obtain a ranked order (from easy to hard) for its columns, which represent the samples. This sorted order (\textit{Sort}) can later be used for efficient prediction on new incoming models (\textit{Search}). We want to find the best global permutation matrix $\mathbf{P} \in \{0,1\}^{n \times n}$, a binary matrix, such that $\mathbf{A}\mathbf{P}$ permutes the \textit{columns} of $\mathbf{A}$ so that we can rank \textit{samples} from \textit{easy} (all 1s across models) to \textit{hard} (all 0s across all models). We say this has a minimum distance from the optimal ranked accuracy prediction matrix $\mathbf{Y} \in \{0,1\}^{m \times n}$ computed by the hamming distance between them, posed as solving the following problem:

\begin{figure}[t]
    \centering
    \vspace{-0.4cm}
    \includegraphics[width=0.98\linewidth]{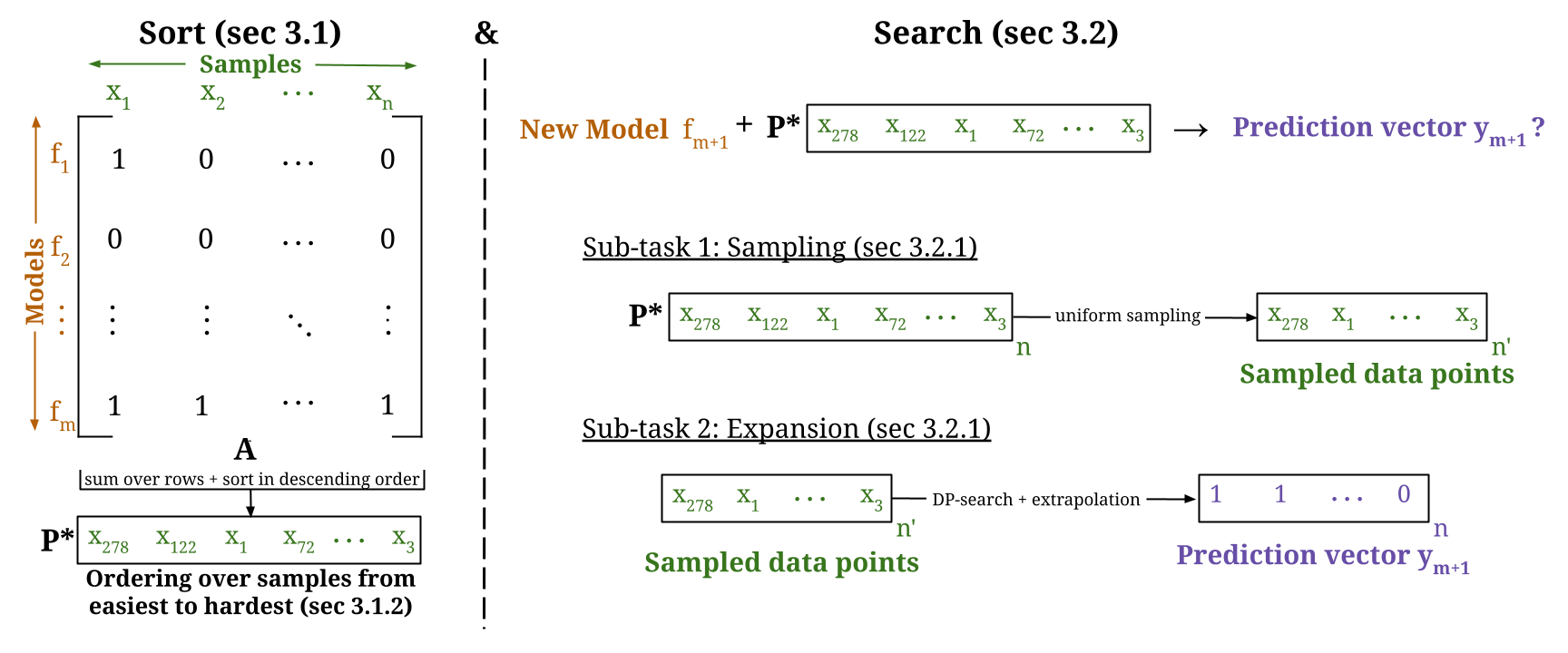}
    \vspace{-0.3cm}
    \caption{\textbf{Full Pipeline of \textit{Sort \& Search}.} For efficiently evaluating new models, \textit{(Left)} we first sort all data samples by difficulty (refer~\cref{sortsec}) and \textit{(Right)} then perform a uniform sampling followed by DP-search and extrapolation for yielding new model predictions (refer~\cref{efficient-selection-by-search}). This entire framework can also be transposed to efficiently insert new samples (refer~\cref{efficient-insertion-section}).}
    \vspace{-0.6cm}
    \label{fig:sspipeline}
\end{figure}

\begin{equation}
\begin{aligned}
   &\mathbf{P^*}, \mathbf{Y}^* = \text{argmin}_{\mathbf{P},\mathbf{Y}} \|\mathbf{A} \mathbf{P}- \mathbf{Y} \|_1,
    &\textit{s.t. } ~~~ \mathbf{P} \in \{0,1\}^{n \times n}, \mathbf{P} \mathbf{1}_{n} = \mathbf{1}_{n}, \mathbf{1}^\top_{n} \mathbf{P} = \mathbf{1}_{n}, \\
    & \text{if } ~~~~~ 
    \mathbf{Y}_{ij} = 1 \text{, then } \mathbf{Y}_{ij'} = 1 ~~ \forall j' \leq j,
    & \text{if } ~~~~~ \mathbf{Y}_{ij} = 0 \text{, then } \mathbf{Y}_{ij'} = 0 ~~\forall j' \geq j. \\
\end{aligned}
\label{eq:1}
\end{equation}

By definition of a permutation matrix, the constraints $\mathbf{P} \mathbf{1}_{n} = \mathbf{1}_{n}, \mathbf{1}^\top_{n} \mathbf{P} = \mathbf{1}_{n}$ on binary $\mathbf{P}$ enforces by definition that $\mathbf{P}$ is a valid permutation matrix. The ranked accuracy prediction matrix $\mathbf{Y}$ is a binary matrix created by a row-wise application of a thresholding operator for every row in $\mathbf{Y}$ separately. The informal explanation of the optimization problem in \cref{eq:1} is to find an ordering of samples such that error introduced by thresholding is minimized.

We next discuss how to solve this optimization. While the goal is finding the optimal permutation $\mathbf{P}^*$, we still need to jointly solve for $\mathbf{P}, \mathbf{Y}$ here. We find a solution by alternating between optimizing $\mathbf{P}$ keeping $\mathbf{Y}$ constant and optimizing $\mathbf{Y}$ keeping $\mathbf{P}$ constant, with the goal of finding the best $\mathbf{P}^*$, with a coordinate descent algorithm. We now present algorithms for optimizing the two subproblems.

\subsubsection{Optimizing $\mathbf{P}$ Given $\mathbf{Y}$} We know $\mathbf{P}$ is binary from~\cref{eq:1}. Hence, finding the optimal $\mathbf{P}^*$ is NP-Hard~\cite{yuan2016binary}. To simplify the sub-problem, we first present an algorithm to solve the case where we can order samples in a strictly decreasing order of difficulty, measured by how many models classified it correctly (\circled{1}). However, samples cannot be arranged as strictly decreasing in practice. Subsequently, we present an alternative which computes soft confidences, enabling the strictly decreasing constraint to hold (\circled{2}). A third alternative we explore removes the introduced constraint of a strictly decreasing order (\circled{3}). 

\textbf{\circled{1} Sorting by Sum.} We discuss how to order samples if they follow a strictly decreasing order of difficulty. We can order samples in decreasing order of difficulty by a simple algorithm detailed in~\cref{listing1} (\texttt{sort\_by\_sum})---intuitively, this algorithm greedily sorts samples from easy (more $1$s) to hard (less $1$s) by sorting the sum vector across rows per column (which can trivially be converted to the permutation matrix $\mathbf{P}^*$).

However, the assumption of strictly decreasing order of difficulty is unrealistic as the number of samples is usually far larger than the number of models. Hence, it is guaranteed that many samples will have the same level of difficulty by the pigeonhole principle~\citep{ajtai1994complexity}. We propose to address this by two methods: (a) Converting the cache ($\mathbf{A}$) to store confidence predictions of ground truth class rather than accuracy (Algorithm \circled{2}), or (b) Iteratively optimizing rows which are tied in sum values (Algorithm \circled{3}). Note that we find \circled{1} Sorting by Sum effective in all our tested scenarios, but provide these alternatives in the case where it is insufficient.

\textbf{\circled{2} Sorting by Confidence Sum.} One method to have a strictly decreasing order is to relax the constraint on the samples of $\mathbf{a}_i=[p_{i1},p_{i2}\dots,p_{in}]$ from  $p_{ij}\in\{0,1\}$ to  $p_{ij} \in [0,1]$, and use confidence of the ground truth class. This modification allows all examples to be unique. The procedure is then identical to Sorting by Sum, i.e. algorithm still greedily sorts samples from easy (more $1$s) to hard (less $1$s) by sorting the sum vector across rows per column.

\textbf{\circled{3} Recursive Sorting by Sum.} Another alternative is relaxing the equal difficulty assumption in Algorithm \circled{1}. A natural question is: \textit{How does one order samples which have equal number of models predicting them correctly, i.e., two columns of $\mathbf{A}$ with equal number of $1$s?} 

We propose an iterative solution: at each step, order samples of equal difficulty by alternatively optimizing $\mathbf{P}$ keeping $\mathbf{Y}$ constant by applying Algorithm \circled{1} and optimizing $\mathbf{Y}$ keeping $\mathbf{P}$ constant by \textit{DP-Search} algorithm (presented in the next Section). We provide the algorithm for two iterations for an illustration in~\cref{listing1} (\texttt{two\_stage\_sort\_by\_sum}). Note that this strictly improves the solution at each recursion depth. Note that ties are broken by preferring the model which minimizes error the most.
   \begin{listing}[t]
    \caption{\textbf{(Left)} Algorithm for Optimizing $\mathbf{P}$ given $\mathbf{Y}$ \textbf{(Right)} Algorithm for Optimizing $\mathbf{Y}$ given $\mathbf{P}$}
    \begin{minipage}{0.49\textwidth}
\begin{minted}[frame=single,tabsize=1,breaklines,fontsize=\scriptsize]{python}
def sort_by_sum(A):
    sum_ranking = A.sum(axis=0)
    order = np.flip(np.argsort(sum_ranking))
    return order

def two_stage_sort_by_sum(A, idx):
    #Step 1: Sum
    order = sort_by_sum(A)
    #Step 1: Search
    thresh = dp_search(A[:, order])
    
    #Iterate over bins
    bins_ordered = sum_bins[order]
    uniq_bins = np.unique(bins_ordered)

    for u_bin in uniq_bins:
        idx = np.nonzero(bins_ordered==u_bin)[0]
        bin_thresh = np.nonzero(np.all([[bins_ordered >= idx.min()],  [bins_ordered <= idx.max()]], axis=0))[1]
        At = A[thresh][:, order[idx]]
        #Step 2: Sum
        new_order = sort_by_sum(At)
        # Replace current ordering within new in bin
        order[idx] = order[idx[new_order]]
    return order
\end{minted}
\end{minipage}
\begin{minipage}{0.49\textwidth}
    \begin{minted}[frame=single,tabsize=1,breaklines,fontsize=\scriptsize]{python}
def uniform_sampling(query, num_p):
    # idx -> num_p uniformly sampled points
    idx = np.arange(0, len(query),
            len(query)//num_p)[1:]
    return idx
    
def dp_search(query):
    # query is 1 x k (from a row of PA) 
    # (k can be assigned := n, n', m, m')
    query[query==0] = -1
    cumsum = np.cumsum(query)
    idx = np.argmax(cumsum)
    return idx/len(query) 
    # threshold as %
\end{minted}
\label{listing1}
\end{minipage}
\end{listing}

\subsubsection{Optimizing $\mathbf{Y}$ given a $\mathbf{P}$.}

Optimizing $\mathbf{Y}$ given a $\mathbf{P}$, is equivalent to finding a row-wise threshold $k \leq n$ minimizing the error with the matrix $\mathbf{AP}$ for a given $\mathbf{P}$. Intuitively, if the threshold for the i$^{\text{th}}$ row is $k$, then the i$^\text{th}$ row is of the form $[\mathbf{1}_k^\top, \mathbf{0}_{n-k}^\top]$ where $\mathbf{1}_k$ is a vector of all ones of size $k$ and $\mathbf{0}_{n-k}$ is a zero vector of size $n-k$. In every row, all samples before the row-wise threshold $k$ are predicted to be correctly classified (easy) and those after are incorrectly classified (hard) for the model corresponding to the row. To optimize $\mathbf{Y}$ given $\mathbf{P}$, we propose a dynamic programming algorithm, \textit{DP-Search} which operates on each row $\mathbf{y}_i$, detailed in~\cref{listing1} (\texttt{dp\_search}). Given a row in $\mathbf{Y}$, DP-Search computes the difference between number of $1$s and number of $0$s for each index. By using a prefix sum structure, for an input of size $n$, the DP approach reduces time complexity from $\mathcal{O}(n^2$) to $\mathcal{O}(n$). The optimal threshold $k$ is the index of the maximum value in this vector. The vector $\mathbf{y}_i$ is simply $[\mathbf{1}_k^\top, \mathbf{0}_{n-k}^\top]$ where $\mathbf{1}_k$ is a vector of all ones of size $k$ and $\mathbf{0}_{n-k}$ is a zero vector of size $n-k$. \textit{DP-Search} is guaranteed to return the globally optimal solution:

\begin{theorem}
    \textbf{Optimality of $\mathbf{Y}$ given $\mathbf{P}$}. For any given $\mathbf{a}_i \in \{0,1\}^{1 \times n}$ and $ \mathbf{P}$, DP-Search returns an ordered prediction vector $\mathbf{y}_i \in \{0,1\}^{1 \times n}$ which is a global minimum of $\|\mathbf{a}_i\mathbf{P} - \mathbf{y}_i \|_1$.
\end{theorem}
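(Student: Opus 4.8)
The plan is to reduce the claim to a one-dimensional combinatorial optimization over the threshold $k$, and then show the cumulative-sum quantity computed by \texttt{dp\_search} is an affine function of the objective, so that its argmax is exactly the minimizer. First I would fix the permutation $\mathbf{P}$ and write $\mathbf{b} \triangleq \mathbf{a}_i \mathbf{P} \in \{0,1\}^{1\times n}$, the permuted row. By the constraints in \cref{eq:1}, any feasible $\mathbf{y}_i$ is of the form $[\mathbf{1}_k^\top, \mathbf{0}_{n-k}^\top]$ for some $k \in \{0, 1, \dots, n\}$, so the feasible set is finite and minimizing $\|\mathbf{b} - \mathbf{y}_i\|_1$ over all ordered prediction vectors is the same as minimizing $g(k) \triangleq \|\mathbf{b} - [\mathbf{1}_k^\top, \mathbf{0}_{n-k}^\top]\|_1$ over $k$. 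This is the first key step: observing the threshold parametrization makes the problem a discrete search over $n+1$ candidates.

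Next I would get a closed form for $g(k)$. Since every entry is $0$ or $1$, $\|\mathbf{b} - \mathbf{y}_i\|_1$ counts disagreements: in the first $k$ coordinates we pay one unit per zero of $\mathbf{b}$, and in the last $n-k$ coordinates we pay one unit per one of $\mathbf{b}$. Writing $S_k \triangleq \sum_{j=1}^k b_j$ (prefix sum, $S_0 = 0$) and $T \triangleq S_n$, the number of zeros among the first $k$ entries is $k - S_k$, and the number of ones among the last $n-k$ entries is $T - S_k$. Hence $g(k) = (k - S_k) + (T - S_k) = k + T - 2S_k$. The second key step is to connect this to the vector \texttt{dp\_search} actually maximizes: after setting zeros to $-1$, entry $j$ becomes $2b_j - 1$, so its prefix cumulative sum at position $k$ is $C_k \triangleq \sum_{j=1}^k (2b_j - 1) = 2S_k - k$. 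Therefore $g(k) = T - C_k$, and since $T$ is a constant independent of $k$, minimizing $g(k)$ is equivalent to maximizing $C_k$ — which is precisely $k^\star = \arg\max_k C_k$, the index returned by the algorithm (up to the normalization by $\mathrm{len}$, which only rescales the reported threshold and does not change which split is chosen). I would also note the boundary conventions: \texttt{np.cumsum} ranges over $k = 1, \dots, n$, and $C_0 = 0$ so $g(0) = T$ is automatically dominated unless all $C_k \le 0$, in which case the algorithm's choice still attains the max; a short remark handles whether $k=0$ is reachable, or one observes it is immaterial since $g$ at the returned index is already globally minimal over the attainable set.

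The main obstacle — really the only subtle point — is the bookkeeping around indexing and boundary cases: making sure the off-by-one between \texttt{np.cumsum} (which excludes the empty prefix) and the threshold semantics $[\mathbf{1}_k^\top,\mathbf{0}_{n-k}^\top]$ is handled, confirming \texttt{np.argmax} breaking ties toward the smallest index is consistent with (and does not affect the optimality of) the stated minimization, and checking that the division by $\mathrm{len}(\mathrm{query})$ in the returned value is just a change of representation from an integer cut-point to a fractional threshold. Once the identity $g(k) = T - C_k$ is in place, optimality is immediate: the returned $k^\star$ maximizes $C_k$, hence minimizes $g(k)$, hence the corresponding $\mathbf{y}_i$ is a global minimum of $\|\mathbf{a}_i\mathbf{P} - \mathbf{y}_i\|_1$ over all feasible ordered prediction vectors, which is exactly the theorem. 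I would close by remarking that the prefix-sum structure is what yields the $\mathcal{O}(n)$ rather than $\mathcal{O}(n^2)$ evaluation, tying back to the complexity claim made just before the theorem.
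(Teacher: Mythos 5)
Your proposal is correct, and it shares its starting point with the paper's proof but finishes differently. Both arguments reduce the problem to a one-dimensional search over the threshold $k$ and write the error at threshold $k$ as (number of zeros of $\mathbf{b}=\mathbf{a}_i\mathbf{P}$ in the prefix) plus (number of ones in the suffix) --- your $g(k)=k+T-2S_k$ is exactly the paper's $\mathbf{e}(\mathbf{y}'_i)$. From there the paper proceeds by an exchange argument: it states a majority-type condition on the suffix (the ones to the right of $i$ do not outnumber the zeros to the right of $i$), and shows via a pairwise comparison of $\mathbf{e}(\mathbf{y}'_j)-\mathbf{e}(\mathbf{y}'_i)=2\sum_{k=j+1}^i\mathbf{b}_k-(i-j)$ that indices satisfying the condition dominate those that do not. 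You instead prove the affine identity $g(k)=T-C_k$, where $C_k=2S_k-k$ is precisely the prefix cumulative sum the code computes after mapping $0\mapsto -1$, so that $\arg\max_k C_k=\arg\min_k g(k)$ is immediate. Your route is tighter in two respects: it verifies optimality of the quantity the algorithm \emph{actually} maximizes rather than of an abstract optimality condition, and it avoids the loose ends in the paper's appendix (the comparison there is only carried out for $j<i$, and the closing claim that the condition "achieves optimality" is left informal). The paper's route, in exchange, yields an interpretable characterization of the optimal cut point (the threshold past which ones cease to be the majority). Your handling of the $k=0$ boundary is also adequate; the paper sidesteps it by restricting thresholds to $\{1,\dots,n\}$, and you correctly observe that the normalization by the length and the \texttt{argmax} tie-breaking do not affect which split is optimal.
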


Applying \textit{DP-Search} independently row-wise, the algorithm returns the optimal $\mathbf{Y}$ given $\mathbf{P}$. Now, we shall

\subsubsection{Process Summary}

We have outlined the process of optimizing (i) $\mathbf{P}$ given $\mathbf{Y}$ and (ii) $\mathbf{Y}$ given $\mathbf{P}$. Note that (i) alone suffices for Sorting Operation when using the \circled{1} Sorting by Sum algorithm, while a combination of (i) and (ii) is primarily needed for \circled{3} Recursive Sorting by Sum. After sorting, we obtain $\mathbf{AP}^*$, which reflects the sample ordering based on difficulty. In the following section, we will reuse (ii) to search for a $\mathbf{Y}$ given $\mathbf{P}$ to efficiently evaluate new models or add new samples.

\subsection{Efficient Selection by Search}
\label{efficient-selection-by-search}

\textbf{Goal.} After solving \cref{eq:1}, we obtain the optimal $\mathbf{P}^*$ in the sorting phase. We assume that this sample difficulty order generalizes to new models, $\Delta m$. Recall that $\mathbf{AP}^*$ represents the columns of cache A, ordered by sample difficulty (those most often misclassified by models). Given $\Delta m$ new models, our goal is to predict accuracy across all $n$ samples for each model, i.e., the accuracy matrix $\mathbf{Y}_{\Delta m} \in \{0,1\}^{\Delta m\times n}$. This would be simple if we could evaluate all $\Delta m$ models on all $n$ samples, but this approach is costly. The challenge is thus to predict performance on the remaining samples while evaluating only a small subset $n' \ll n$. Hence, we will assume that we can create a smaller ground truth subset $\mathbf{a}'_{m+1}$ and study: How to find the best accuracy prediction vector $\mathbf{y}_{m+1}$? We use the ground truth vector $\mathbf{a}_{m+1}$ for evaluating the efficacy of our method.

Recall that evaluation of every new model can be done independently of others, i.e.  $\mathbf{Y}_{\Delta m}$ is separable per row. Hence, we describe the problem for the first new model $\mathbf{y}_{m+1} \in \{0,1\}^{1\times n}$ here.

\textbf{(i) How to get the optimal $\mathbf{y}_{m+1}$?} Our goal here is to generate the sample-wise prediction vector $\mathbf{y}_{m+1} \in \{0,1\}^{1 \times n}$. We divide it into two subtasks: \textit{selection} and \textit{optimization}. The selection task is to select the best $n'$ observations to sample. The optimization task is, given the $n'$ observations $\mathbf{a}_{m+1}' \in \{0,1\}^{1 \times n'}$ how to generate the prediction vector $\mathbf{y}_{m+1} \in \{0,1\}^{1 \times n}$. %

\underline{\textit{Subtask 1: How to Select Samples?}} We want to find the best $n'$ observations forming $\mathbf{a'}$. Note that any ranked solution we obtain using this vector needs to be interpolated from $n'$ points to $n$ points---we use this intuition to sample $n'$ points. Hence, a simple solution is to sample points such that any threshold found minimizes the difference between the actual threshold and a threshold predicted by our set of $n'$, \ie, sample $n'$ points uniformly, providing the algorithm in Listing \ref{listing1} (\texttt{uniform\_sampling}). We also compare empirically with a pure random sampling approach in~\cref{sec:results}.

\underline{\textit{Subtask 2: Optimizing $\mathbf{y}_{m+1}$.}} Given the $n'$ observations $\mathbf{a}_{m+1}' \in \{0,1\}^{1 \times n'}$, how to generate the prediction vector $\mathbf{y}_{m+1} \in \{0,1\}^{1 \times n}$? We use the threshold given by \textit{DP-Search} (\cref{listing1}) and obtain the threshold, given in terms of fraction of samples in $|\mathbf{a}_{m+1}'|$. We extrapolate this threshold from $n'$ to $n$ points, to obtain the threshold for the prediction vector $\mathbf{y}_{m+1}$. $\mathbf{y}_{m+1}$ is simply $[\mathbf{1}_k^\top, \mathbf{0}_{n-k}^\top]$ where $\mathbf{1}_k$ is a vector of all ones of size $k$ and $\mathbf{0}_{n-k}$ is a zero vector of size $n-k$.

So far, we have only discussed evaluation of $\Delta m$ new models ($\redcircled{2} \,\text{\texttt{insert}}_{\mathcal{M}}$). How can we also efficiently extend the benchmark \textit{i.e.} efficiently adding $\Delta n$ new samples ($\redcircled{1} \,\text{\texttt{insert}}_{\mathcal{D}}$)?

\subsection{Efficient Insertion of New Samples ($\text{\texttt{insert}}_{\mathcal{D}}$)}
\label{efficient-insertion-section}

To add new samples into our lifelong benchmark efficiently, we have to estimate their difficulty with respect to the other samples in the cache $\mathbf{A}$. To efficiently determine difficulty by only evaluating $m' \ll m$ models, a ranking over models is required to enable optimally sub-sampling a subset of $m'$ models. This problem is quite similar in structure to the previously discussed addition of new models, where we had to evaluate using a subset of $n' \ll n$ samples. \textit{How do we connect the two problems?}

We recast the same optimization objectives as described in~\cref{eq:1}, but replace $\mathbf{A}$ with $\mathbf{A}^\top$ and $\mathbf{Y}$ with $\mathbf{Y}^\top$. In this case,~\cref{eq:1} would have $\mathbf{A}^\top \mathbf{P}$, which would sort models, instead of samples, based on their aggregate sum over samples (\ie, accuracy) optimized using Algorithm \circled{1} to obtain $\mathbf{P}^*$, ordering the models from classifying least samples correctly to most samples correctly. Here, Algorithm \circled{1} is sufficient, without needing to solve the joint optimization (\circled{3}) because accuracies (sum across rows) are  unique as the number of samples is typically much larger than the number of models. In case of new incoming samples $\Delta n$, we similarly would treat every sample independently and optimize the predicted array $\mathbf{y}^\top_{n+1}$ using \textit{Efficient Selection by Search} (\cref{efficient-selection-by-search}). 
\section{Experiments}
\label{sec:results}

To validate \textit{Sort \& Search} empirically, we showcase experiments on two tasks:
\redcircled{1} \textit{efficient estimation of new sample difficulties} ($\text{\texttt{insert}}_{\mathcal{D}}$)
and
\redcircled{2}
\textit{efficient performance evaluation of new models} ($\text{\texttt{insert}}_{\mathcal{M}}$). We then comprehensively analyse various design choices within our \textit{S\&S} framework.

\subsection{Experimental Details}
\label{experimental-details}

\noindent\textbf{Lifelong-Datasets.} We combine 31 domains of different CIFAR10-like datasets comprising samples with various distribution shifts, synthetic samples generated by diffusion models, and samples queried from different search engines to form \textit{Lifelong-CIFAR10}. We deduplicate our dataset and downsample images to ${32}{\times}{32}$. Our final dataset consists of 1.69M samples. Similarly, we source test samples from ImageNet and corresponding variants to form \textit{Lifelong-Imagenet}, designed for increased sample diversity (43 unique domains) while operating on the same ImageNet classes. We include samples from different web-engines and generated using diffusion models. Our final \textit{Lifelong-ImageNet} contains 1.98M samples (see full list of dataset breakdown in~\cref{sec:lifelong-benchmarks}).

\label{exp-details}
\noindent\textbf{Model Space.} For \textit{Lifelong-CIFAR10}, we use $31,250$ CIFAR-10 pre-trained models from the NATS-Bench-Topology-search space~\cite{dong2021nats}. For \textit{Lifelong-ImageNet}, we use $167$ ImageNet-1K and ImageNet-21K pre-trained models, sourced primarily from \texttt{timm}~\cite{rw2019timm} and \texttt{imagenet-testbed}~\cite{taori2020measuring}. 

\textbf{Sample Addition} \textbf{Split} (\redcircled{1} $\text{\texttt{insert}}_{\mathcal{D}}$)\textbf{.} To study efficient estimation of new sample difficulties on \textit{Lifelong-CIFAR10}, we hold-out CIFAR-10W~\citep{sun2023cifar} samples for evaluation (${\sim}{500,000}$ samples) and use the rest ${\sim}{1.2}$ million samples for sorting. We do not perform experiments for \textit{Lifelong-Imagenet} since the number of models is quite small (167 in total), directly evaluating all models is relatively efficient, as opposed to the more challenging \textit{Lifelong-CIFAR10} where evaluation on $31,250$ models is expensive, practically necessitating reducing the number of models evaluated per new sample.

\textbf{Model Evaluation} \textbf{Split} (\redcircled{2} $\text{\texttt{insert}}_{\mathcal{M}}$)\textbf{.} To study efficient evaluation of new models, we split the model set for the \textit{Lifelong-CIFAR10} benchmark into a randomly selected subset of $6,000$ models for ordering samples (\textit{i.e.,} \textit{Sort}) and evaluate metrics on the remaining $25,250$ models (\textit{i.e.,} \textit{Search}). For \textit{Lifelong-Imagenet}, we use $50$ random models for ordering samples and evaluate on $117$ models.

\noindent\textbf{Metrics} (\redcircled{3} $\text{\texttt{metrics}}()$)\textbf{.} We measure errors between estimated predictions for each new model $\mathbf{y}_{m+1}$ and ground-truth predictions $\mathbf{a}_{m+1}$ using mean-absolute error (MAE): $E(\mathbf{a}_{m+1}, \mathbf{y}_{m+1})$: 
\begin{equation}
    E(\mathbf{a}_{m+1}, \mathbf{y}_{m+1}) =  \nicefrac{\|\mathbf{a}_{m+1}\mathbf{P}^* - \mathbf{y}_{m+1} \|_1}{n}
\label{eq:2}
\end{equation}

\subsection{Results: Sample-Level Model Performance Estimation ($\text{\texttt{insert}}_{\mathcal{M}}$)} 
\label{model-eval}

We evaluate the predictive power of \textit{S\&S} for evaluating new models (\redcircled{2}) when subjected to a varying sampling budgets $n'$ \textit{i.e.,} we run our \textit{S\&S} over $13$ different sampling budgets: \{8, 16, 32, 64, 128, 256, 512, 1024, 2048, 4096, 8192, 16384, 32768\} on both \textit{Lifelong-ImageNet} and \textit{Lifelong-CIFAR10}.

Our main results in~\cref{model-eval,sample-eval} use \textit{Sorting by Sum} (\circled{1}) 
for obtaining $\mathbf{P}^*$ and uniform sampling for the sample budget $n'$. Using this configuration, we now present our main results.

\noindent\textbf{Key Result 1: Extreme Cost-Efficiency.} From~\cref{fig:cifar10-joint-main,fig:imagenet-joint-main}, we note our approach converges to a very low mean-absolute error
with \nicefrac{1}{1000} the number of evaluation samples, leading to extreme cost savings at inference time (from 180 GPU days to 5 GPU hours on one A100-80GB GPU)\footnote{The ``compute saved'' axis in the plots is computed as $\frac{n}{n'}$. Effective compute savings are: In \textit{Lifelong-CIFAR10}, we do  ${25,250}{\times}{1,697,682}$ evaluations in the full evaluation v/s ${25,250}{\times}{2,048}$ in our evaluation. Similarly, for \textit{Lifelong-ImageNet}, we perform ${117}{\times}{1,986,310}$ v/s ${117}{\times}{2,048}$ evaluations.}. 

\noindent\textbf{Key Result 2: Mean Absolute Error Decays Exponentially.} Upon analysing the observed $E$ vs. $n'$ relationship, we note that exponentially decreasing curves fit perfectly in~\cref{fig:cifar10-joint-main,fig:imagenet-joint-main}. The exponential decay takes the form ${E}{=}{a}{e}^{{-}{b}{x}}{+}{c}$. The fitted curves have large exponential coefficients $b$ of $0.04$ and $0.02$. This further shows the surprisingly high sample-efficiency obtained by \textit{S\&S}. 

\noindent\textbf{Key Result 3: Outperforming Baselines by Large Margins.} We construct a competitive, scalable version of \citet{vivek2023anchor} as a baseline, called \textit{CopyNearest\&Expand}: It first samples ${n}’$ points out of $n$ (similar to \textit{S\&S} without sorting), and then expands the ${n}’$-sized prediction array to $n$ samples by copying the rest ${n}{-}{n’}$ predictions from the nearest neighbor prediction array from the ranking set of models.
We note that this baseline is equivalent to removing the \textit{Sort} component, and only using random sampling.
Comparing to the baseline, we see from~\cref{fig:cifar10-power-law-main} that our \textit{Sort \& Search} is:

\textit{1) More accurate:} It achieved 1\% lower MAE at a sampling budget of ${n’}{=}{8192}$ compared to the baseline, meaning that on average, our \textit{S\&S} correctly classifies ${\sim}{19}$k more samples.

\textit{2) Faster convergence:} \textit{S\&S} converges much faster than the baseline (at ${n'}{=}{1,024}$ vs. ${n'}{=}{32,768}$) thereby showcasing the high degree of sample efficiency in converging to the minimal error.

\textit{3) Consistent:}~\cref{fig:num-models-for-ranking-ablation} shows the better consistency of \textit{S\&S}, across wider range of models used for \textit{Sort}---at ${n'}{=}{512}$, \textit{S\&S} with only $10$ \textit{Sort}-models still outperforms the baseline using $50$ \textit{Sort}-models.

\textbf{Storage Efficiency.} Storage Efficiency. Our method (S\&S) achieves high storage efficiency, requiring only two 1D arrays: one to store the sort-sum and another to construct the current search output. This results in minimal storage overhead, amounting to just 0.0166\% of the input data or less than 100 MB in absolute terms. Consequently, \textit{Sort\&Search} not only outperforms alternative methods, such as CopyNearest\&Expand, but is also far more memory-optimized.
\begin{figure*}[t]
    \centering
    \vspace*{-0.3cm}
    \hspace*{-0.5cm}
    \subfigure[\textit{Lifelong-CIFAR10}]{
        \includegraphics[width=0.305\textwidth]{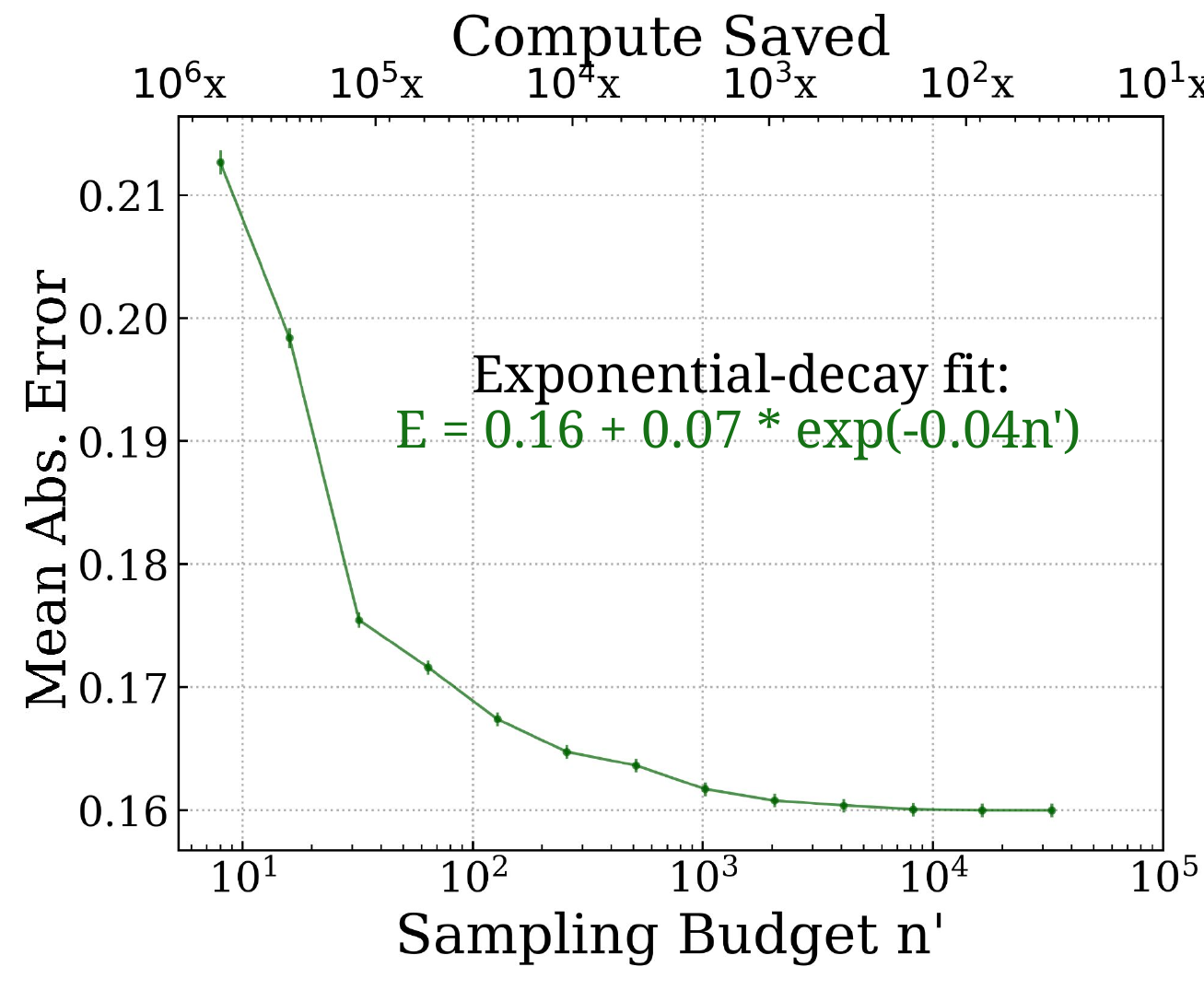}
        \label{fig:cifar10-joint-main}
    }\hspace*{-0.2cm}
    \subfigure[\textit{Lifelong-ImageNet}]{
    \hspace*{-0.25cm}
        \includegraphics[width=0.305\textwidth]{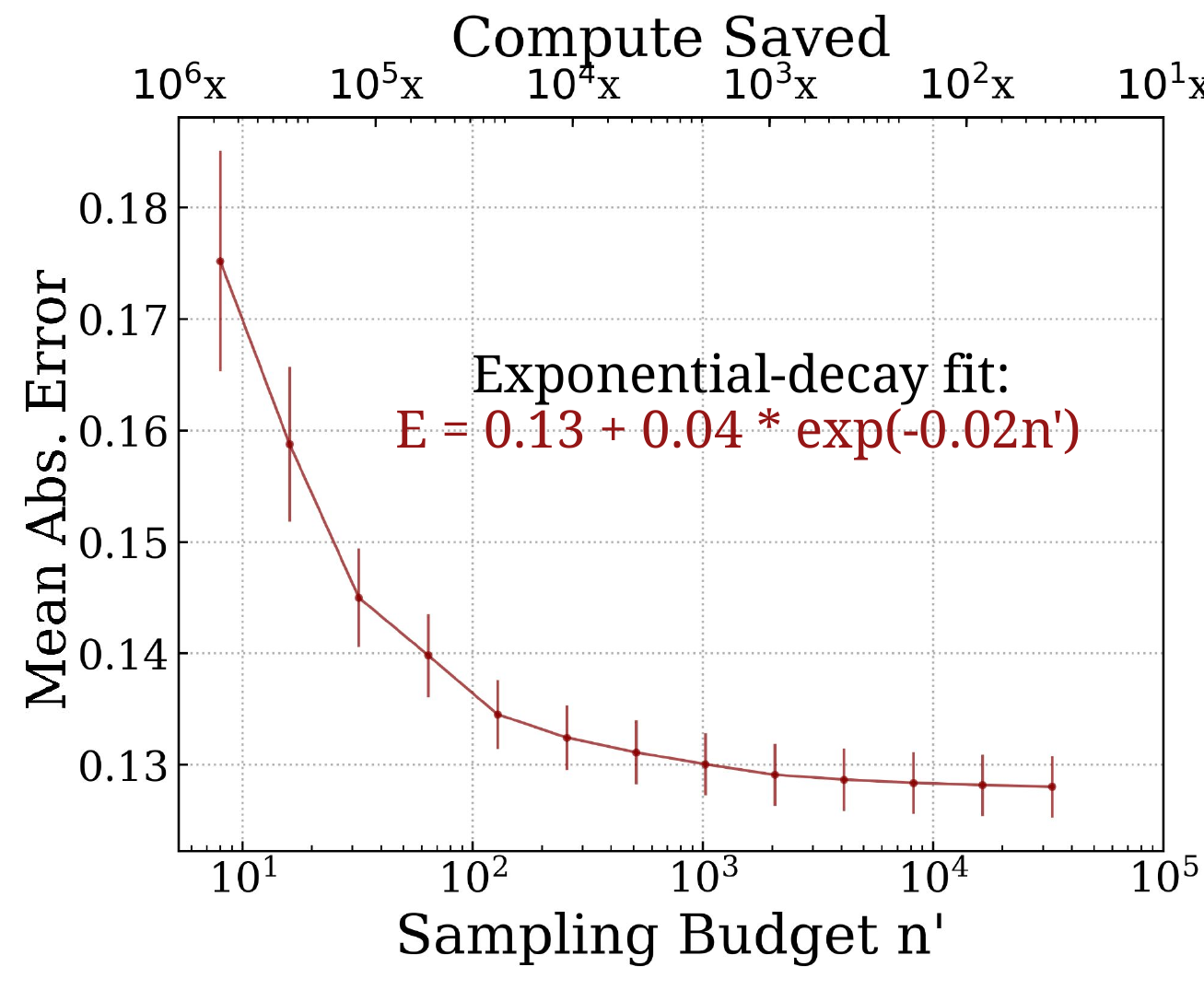}
        \label{fig:imagenet-joint-main}
    }\hspace*{-0.2cm}
    \subfigure[\textit{Baseline Comparison}]{
        \includegraphics[width=0.315\textwidth]{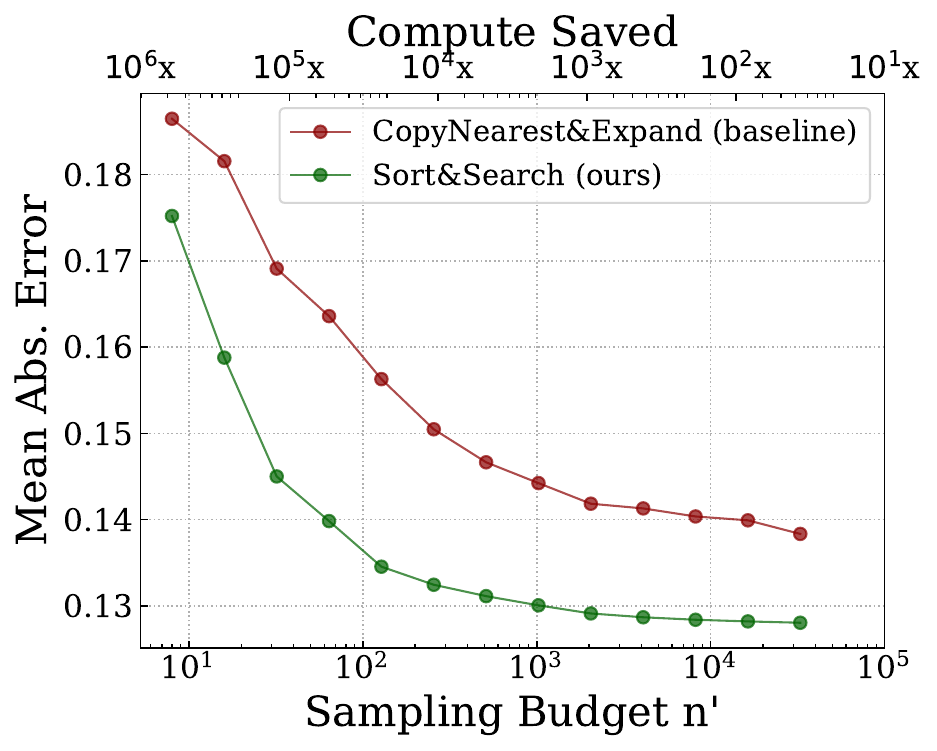}
        \label{fig:cifar10-power-law-main}
    }\hspace*{-0.4cm}
    
    \vspace*{-0.2cm}
    \label{fig:main-results-fig}
    \caption{\textbf{Main Results.} \textit{(a,b)} We achieve 99\% cost-savings for new model evaluation on \textit{Lifelong-ImageNet} and\textit{ Lifelong-CIFAR10} showcasing the efficiency (MAE decays exponentially with $n'$) of \textit{Sort\&Search}. \textit{(c)} \textit{S\&S} is more efficient and accurate compared to the baseline on \textit{Lifelong-ImageNet}.} 
    \vspace*{-0.6cm}
\end{figure*}

\vspace{-0.3cm}
\subsection{Results: Sample Difficulty Estimation ($\text{\texttt{insert}}_{\mathcal{D}}$)}
\label{sample-eval}
We next showcase results for the task (\redcircled{1}) where for new samples, the goal is to sub-sample the number of models to evaluate, for accurately determining sample difficulty.
We present results on \textit{Lifelong-CIFAR10}, with two different methods for ranking models\footnote{Recursive sum (\circled{3}) is not applicable here as all sum values are unique, see~\cref{efficient-insertion-section}.}, \textit{Sorting by Sum} (\circled{1}) and \textit{Sorting by Confidence Sum} (\circled{2}). We evaluate over $9$ model budgets $m'$ (number of models evaluated over): $\{8,16,32,64,128,256,512,1024,2048\}$. From~\cref{fig:lifelong-cifar10-sample-evaluation}, we observe that both methods converge quickly---\textit{Sorting by Sum} (\circled{1}) reaches an MAE < 0.15 by only evaluating on ${m'}{=}{64}$ models out of $31,250$ ($10^4\times$ computation savings). This demonstrates our method's ability to efficiently determine sample difficulty, enabling efficient insertion back into the lifelong-benchmark pool.

\subsection{Breaking down \textit{Sort \& Search}}
\label{design-choices}

\vspace{-0.2cm}

\noindent\textbf{Varying the Number of \textit{Sort}-Models Used.} In~\cref{fig:num-models-for-ranking-ablation}, we analyse the effect of the number of models used for computing the initial ranking (\textit{i.e.}, $m$) on the final performance on \textit{Lifelong-ImageNet}. Using more models improves MAE--- using lesser models for ranking (${m}{=}{10}$) converges to a higher MAE (2\% difference at convergence when using ${m}{=}{50}$ (blue line) vs. ${m}{=}{10}$ (red line)). Note that the $m$ used for ranking does not have any effect on the speed of convergence itself (all methods roughly converge at the same sampling budget (${n'}{=}{2,048}$)), but rather only on the MAE achieved. 

\noindent\textbf{Different Sorting Methods.} We compare the three different algorithms on \textit{Lifelong-Imagenet}: \circled{1} \textit{Sorting by Sum}, \circled{2} \textit{Sorting by Confidence Sum}, and \circled{3} \textit{Sorting by Recursive Sum}. From~\cref{fig:ranking-methods-ablation}, we note an MAE degradation when using the continual relaxation of the accuracy prediction values as confidence values, signifying no benefits. However, using the multi-step recursive correction of rankings (\circled{3}) provides significant boosts (0.5\% boost in MAE at all ${n'}{>}1,024$) due to its ability to locally correct ranking errors that the global sum method (\circled{1}) is unable to account for.

\noindent\textbf{Different Sampling Methods.} In~\cref{fig:sampling-methods-ablation}, we compare methods used for sub-selecting the data-samples to evaluate---we compare \textit{uniform} vs. \textit{random}
sampling. Both methods converge very quickly and at similar budgets to their optimal values and start plateauing. However, uniform sampling provides large boosts over random sampling when the sampling budget is small (5\% lower MAE at ${n'}{=}{8}$)---this can be attributed to its ``diversity-seeking'' behaviour which helps cover samples from all difficulty ranges, better representing the entire benchmark evaluation samples rather than an unrepresentative random set sampled via random sampling.

\begin{figure*}[t]
\vspace*{-0.1cm}
    \subfigure[\textit{Sample Difficulty}]{
    \hspace*{-0.4cm}
        \includegraphics[width=0.26\textwidth]{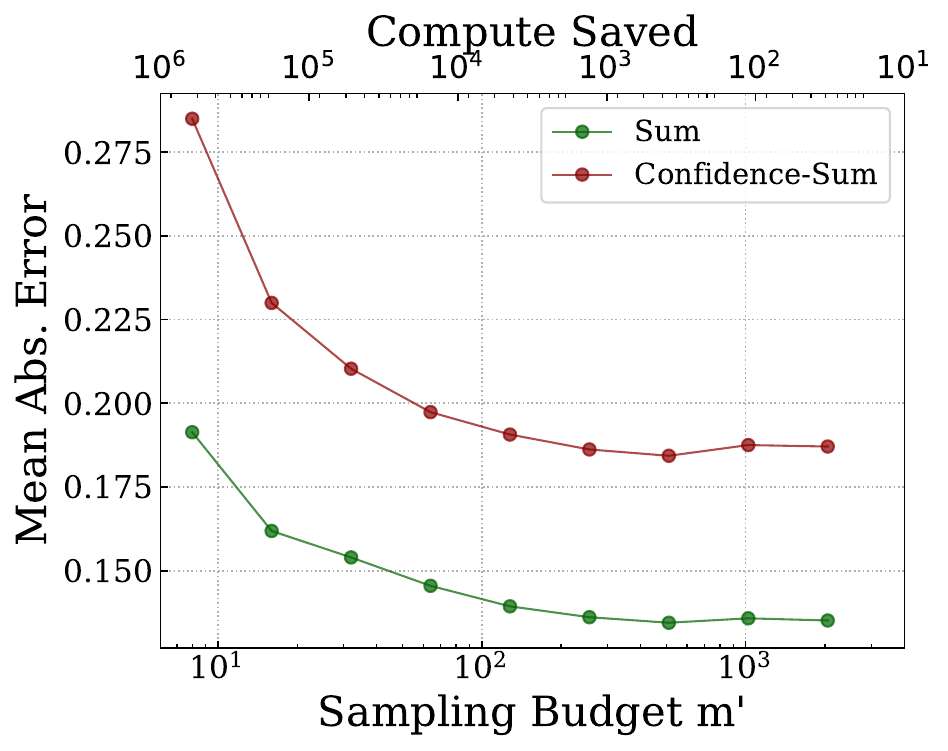}
        \label{fig:lifelong-cifar10-sample-evaluation}
    }\hspace*{-0.4cm}
    \subfigure[\textit{\#Ranking models}]{
        \includegraphics[width=0.25\textwidth]{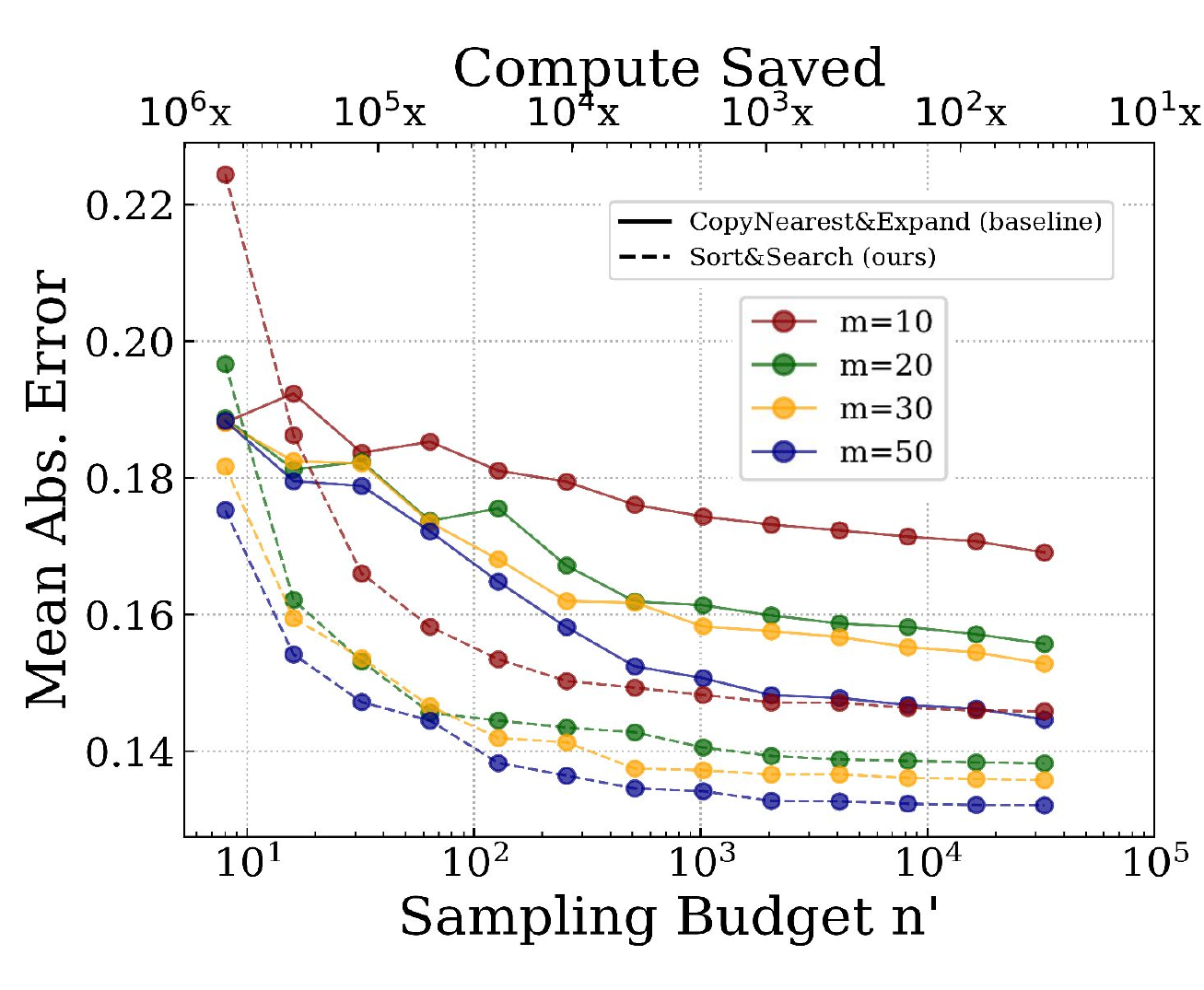}
        \label{fig:num-models-for-ranking-ablation}
    }\hspace*{-0.4cm}
    \subfigure[\textit{Ranking methods}]{
        \includegraphics[width=0.25\textwidth]{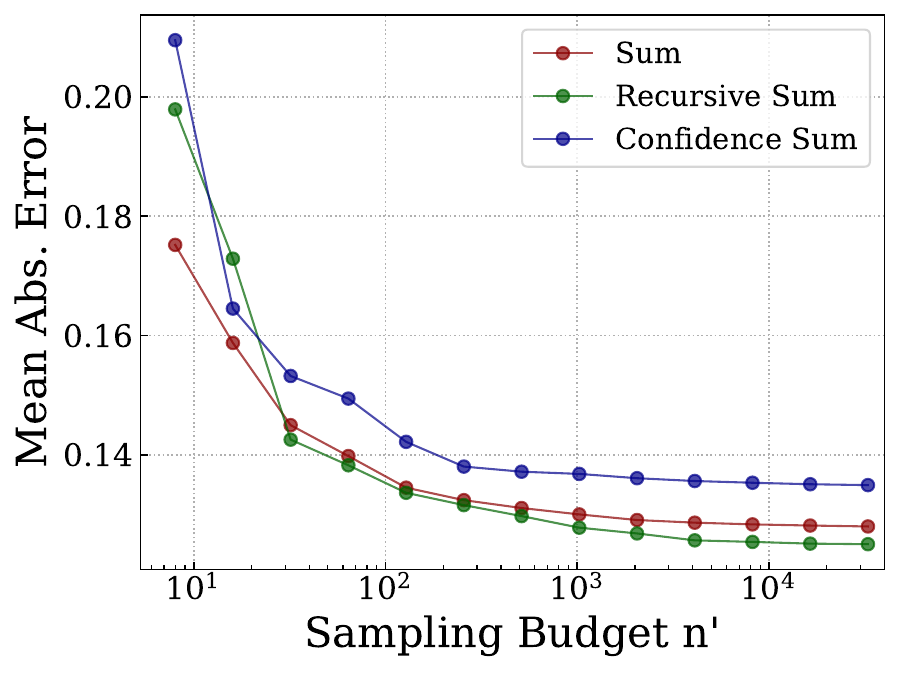}
        \label{fig:ranking-methods-ablation}
    }\hspace*{-0.4cm}
    \subfigure[\textit{Sampling methods}]{
        \includegraphics[width=0.25\textwidth]{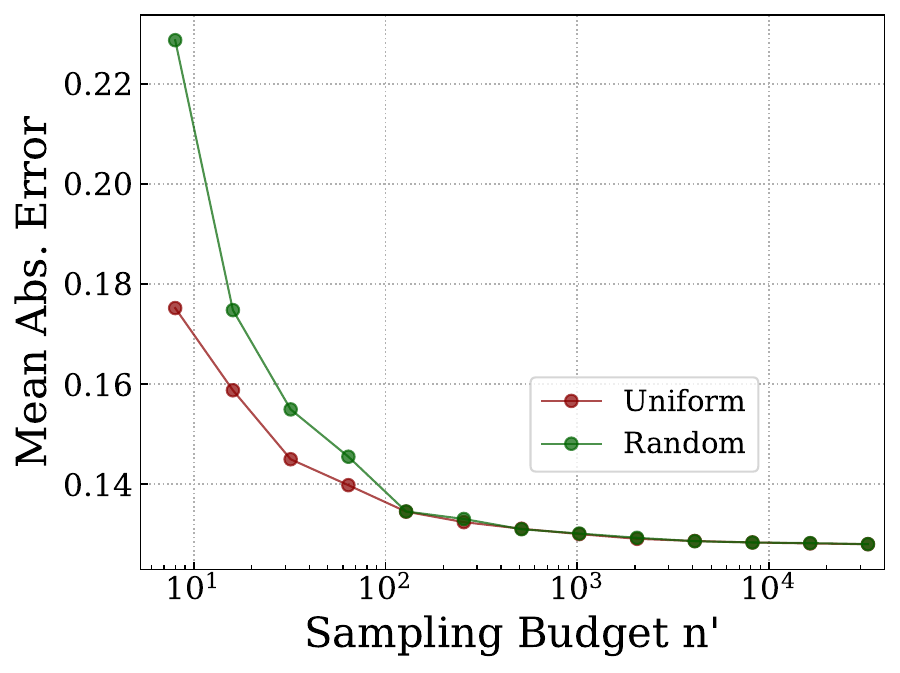}
        \label{fig:sampling-methods-ablation}
    }
    \label{fig:aux-results-fig}
    \vspace{-0.4cm}
    \caption{\textit{(a)} We achieve accurate sample difficulty estimates on \textit{Lifelong-CIFAR10} ($<$0.15 MAE) at a fraction of the total number of models to be evaluated, thereby enabling cost-efficient sample insertion. \textit{(b,c,d)}, We analyse three design choices for better understanding \textit{S\&S}, using \textit{Lifelong-Imagenet}.
    }
    \vspace*{-0.4cm}
\end{figure*}

\subsection{Decomposing the Errors of \textit{S\&S}}
\label{error-decomp-section}

Here, we showcase a decomposition of the errors of \textit{Sort \& Search}. Specifically, the total mean absolute error $E(\mathbf{a}_{m+1},\mathbf{y}_{m+1})$ can be decomposed into a component irreducible by further sampling, referred to as the Aleatoric Sampling Error ($E_{\text{aleatoric}}$), and a component which can be improved by querying larger fraction of samples $n'$, referred to as the Epistemic Sampling Error ($E_{\text{epistemic}}$).

\begin{figure*}[h]
    \centering
    \includegraphics[width=0.35\textwidth]{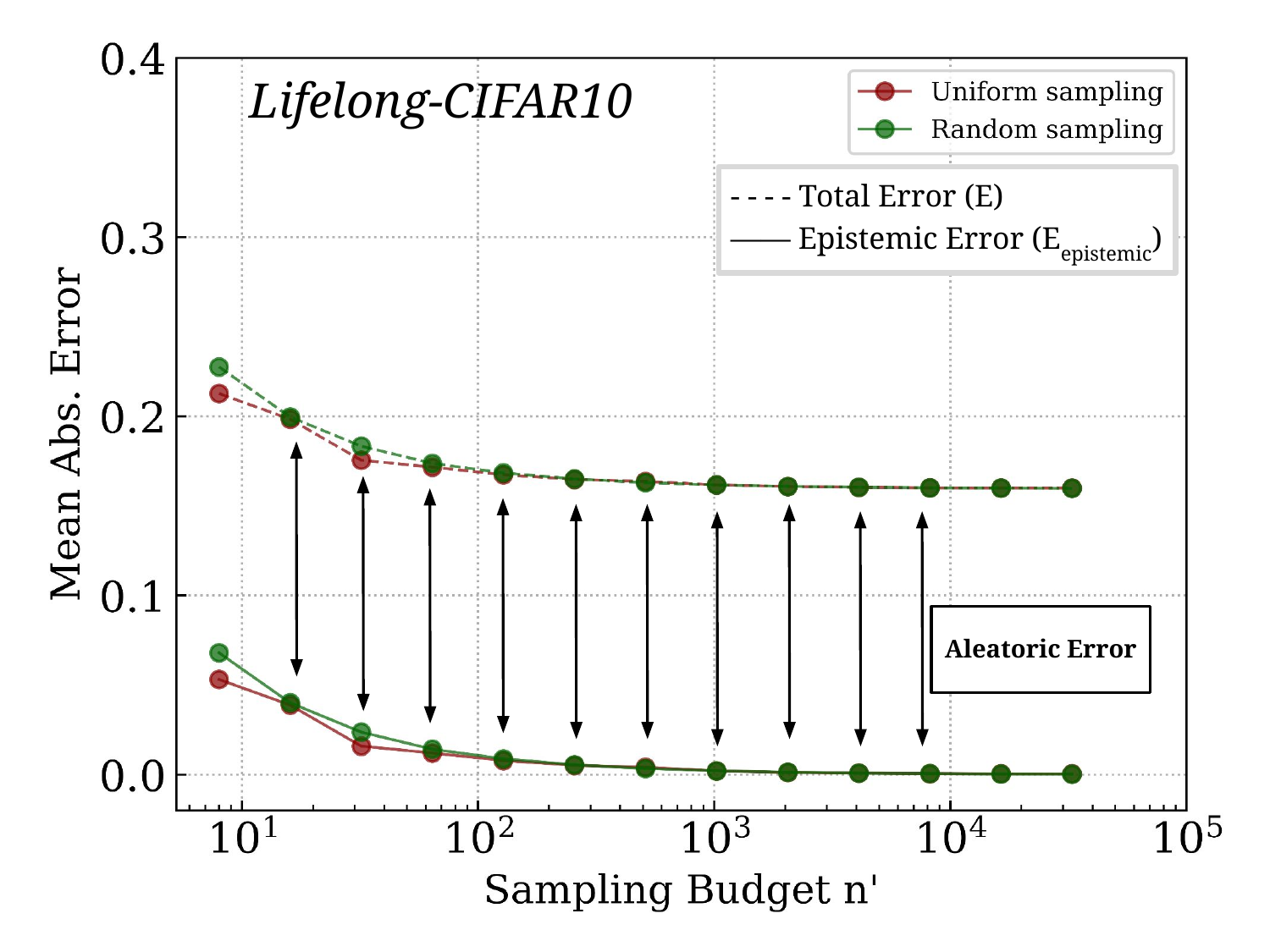}
    \includegraphics[width=0.35\textwidth]{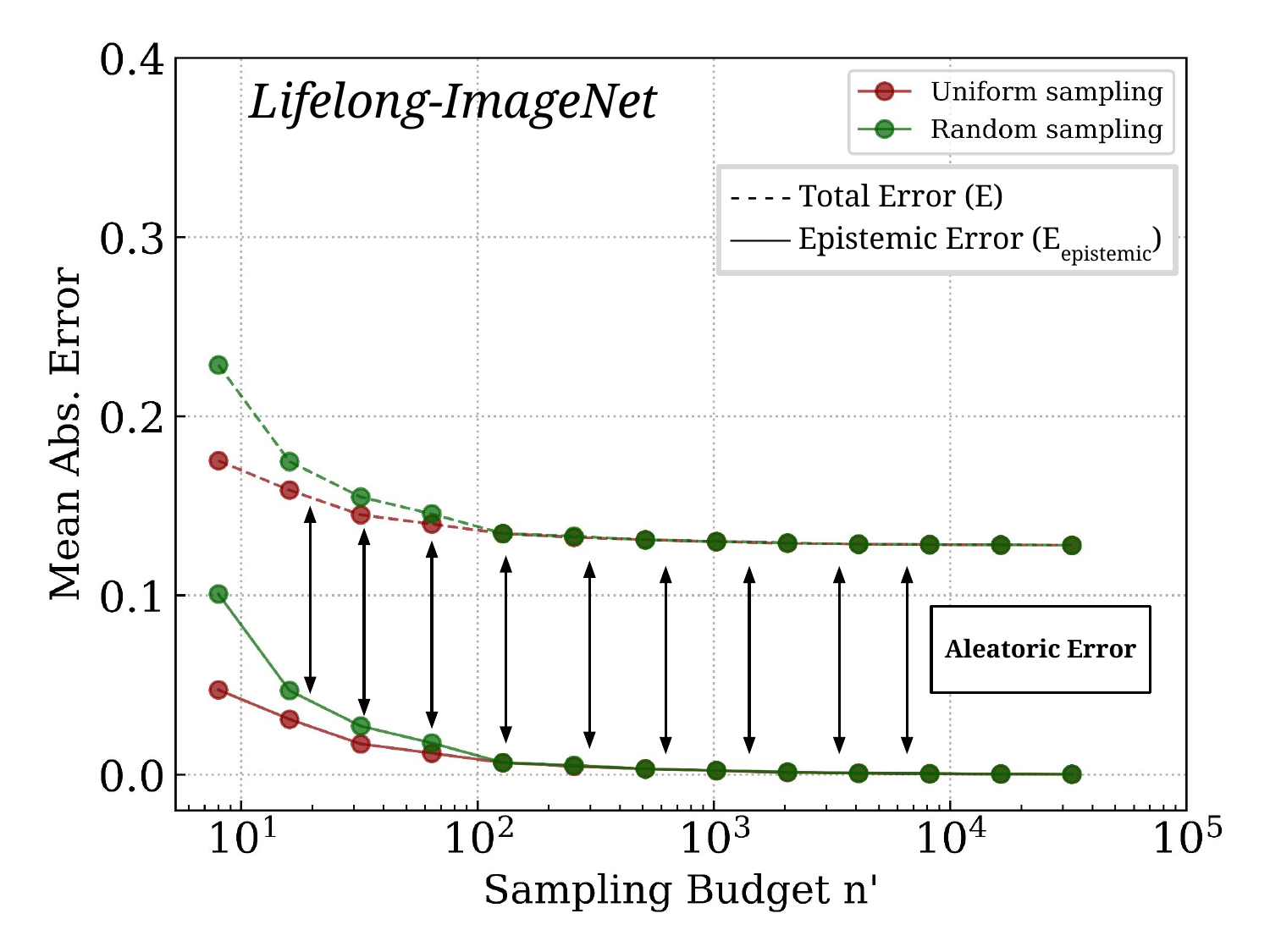} 
    \vspace{-0.3cm}
    \caption{\textbf{Error Decomposition Analysis on \textit{Lifelong-CIFAR10} (\textit{left}) and \textit{Lifelong-ImageNet} (\textit{right}).} We observe that epistemic error (solid line) drops to 0 within only 100 to 1000 samples across both datasets, indicating this error cannot be reduced further by better sampling methods. The total error $E$ is almost entirely irreducible (Aleatoric), induced because new models do not perfectly align with the ranking order $\mathbf{P}^*$. This suggests \textit{generalizing beyond a single rank ordering}, \textit{\textbf{not} better sampling strategies}, should be the focus of subsequent research efforts.}
    \label{fig:error-decomp}
    \vspace*{-0.3cm}
\end{figure*}

\textbf{Aleatoric Sampling Error.} Let $\mathbf{y}_{m+1}^* = \mathbf{y}'$ when $n' = n$, \textit{i.e.}, it is the best prediction obtainable across all subsampled thresholds, as we have access to the full $\mathbf{a}_{m+1}$ vector. However, some error remains between $\mathbf{y}^*$ and $\mathbf{a}_{m+1}$ due to the ordering operation (\textit{i.e.}, \textit{Sort}). This error, caused by errors in the generalization of the permutation matrix $\mathbf{P}^*$ cannot be reduced by increasing the sample budget $n'$. 
More formally, we define this error as:

\begin{equation}
\begin{aligned}
     E_{\text{aleatoric}}(\mathbf{a}_{m+1},\mathbf{y}_{m+1}) &= \min_{\mathbf{y}_{m+1}} \| \mathbf{a}_{m+1}\mathbf{P}^* - \mathbf{y}_{m+1} \| = \| \mathbf{a}_{m+1}\mathbf{P}^* - \mathbf{y}_{m+1}^* \|.
\end{aligned}
\label{eq:supp1}
\end{equation}

\textbf{Epistemic Sampling Error.} Contrarily, there is a gap between optimal ranking prediction $\mathbf{y}_{m+1}^*$ and $\mathbf{y}_{m+1}$ with the current sample size $n'$. This gap, Epistemic Sampling Error, is formally defined as:

\begin{equation}
     E_{\text{epistemic}}(\mathbf{y}_{m+1}^*,\mathbf{y}_{m+1}) = \| \mathbf{y}_{m+1}^* - \mathbf{y}_{m+1} \|. \\
\label{eq:supp2}
\end{equation}

\textbf{Results.} We analyse sampling effectiveness in \textit{Lifelong CIFAR-10} and \textit{Lifelong-ImageNet} by studying the Epistemic Sampling Error ($E_{\text{epistemic}}$) and Aleatoric Sampling Error ($E_{\text{aleatoric}}$) in Figure \ref{fig:error-decomp}. First, we see that the epistemic error is very low and quickly converges to 0, \ie, we converge to the best achievable performance within sampling just 100 to 1000 samples on both datasets. The remaining error after that is irreducible. We attribute it primarily caused by generalization gaps in the global permutation matrix $\mathbf{P}^*$ as better approximations like \textit{Recursive Sum} (\circled{3}) did not improve performance as shown in~\cref{fig:ranking-methods-ablation}. This introduces an interesting question: Do models follow a single global ranking order or are they better decomposed into different rank orders?

\textbf{How consistently do models follow one single global ranking order?} We present a detailed analysis in~\cref{appendix:d} to verify this. We calculated the cross-correlation matrix for predictions from 167 models across the entire \textit{Lifelong-Imagenet} benchmark (1.9M test samples). Surprisingly, \textit{all model pairs showed positive correlations} to varying degrees, with \textit{no pairs being anti-correlated}. Models with near-zero correlations had near-random performance, indicating uncorrelated predictions due to their randomness. Top-performing models exhibited slightly higher correlations. Overall, there was no clear evidence of model cliques. This analysis strongly suggests that model predictions are highly correlated, justifying our choice of using a single ranking function, but the ranking is simply noisy.

\vspace{-0.1cm}
\section{Conclusion}
\label{sec:conclusion}

In this work, we address the efficient lifelong evaluation of models. To mitigate the rising evaluation costs on large-scale benchmarks, we proposed an efficient framework called \textit{Sort \& Search}, which leverages previous model predictions to rank and selectively evaluate test samples. Our extensive experiments, involving over 31,000 models, demonstrate that our method reduces evaluation costs by 1000x (over 99.9\%) with minimal impact on estimated performance on a sample-level. We aim for \textit{Sort \& Search} to inspire the development of more robust and efficient evaluation methods. Our findings show that model predictions are highly correlated, supporting our use of a single ranking function, though the ranking is somewhat noisy. Our analysis of \textit{Sort \& Search} suggests that future research should focus on generalizing beyond a single rank ordering, rather than on better sampling strategies. Overall, we hope \textit{Sort \& Search} enables large reductions in model evaluation cost and provides promising avenues for future work in lifelong model evaluation.

\subsection{Limitations and Open Problems}
\label{limitations}

Although showcasing very promising results in enhancing the efficiency of evaluating models on our large-scale Lifelong Benchmarks, our investigation with \textit{S\&S} leads to some interesting open problems:

(1) \textit{Ranking Imprecision}: Our error decomposition analysis provides convincing  evidence (\cref{error-decomp-section}) that the ordering of samples $\mathbf{P}^*$ while evaluating new models bottlenecks prediction performance. Generalizing from imposing a single sample ordering $\mathbf{P}^*$ to sample ordering structures, such as different clusters of models each with their own orderings or rejection frameworks for models if it does not  align with the ordering could dramatically improve the framework.
 
(2) \textit{Identifying Difficult Samples}: Finding and labeling challenging examples is an essential task for lifelong benchmarks, which is not the focus of our work. Studying hard or adversarial sample selection approaches with lifelong benchmarking is a promising direction. We provide an extensive survey of related approaches in this direction in~\cref{extended-related-work}.

(3) \textit{Scaling up to Foundation Models}: Our work mainly tackles lifelong model evaluation under an image classification setting for trained classification models. Despite it being clear that our method should scale to foundation models, since it only relies on the existence of an $A$ matrix, it would be interesting to test it on more benchmarks from the LLM and VLM domain. Follow-up works \citep{anonymous2024democratizing} have built on this direction.

\section*{Acknowledgements}
The authors would like to thank (in alphabetic order): Bruno Andreis, Çağatay Yıldız, Fabio Pizzati, Federico D'Agostino, Ori Press, Shashwat Goel, and Shyamgopal Karthik for helpful feedback. AP is funded by Meta AI Grant No. DFR05540. VU thanks the International Max Planck Research School for Intelligent Systems (IMPRS-IS) and the European Laboratory for Learning and Intelligent Systems (ELLIS) PhD program for support.
VU was supported by a Google PhD
Fellowship in Machine Intelligence.
PT thanks the Royal Academy of Engineering for their support. 
AB acknowledges the funding from the KAUST Office of Sponsored Research (OSR-CRG2021-4648) and the support from Google Cloud through the Google Gemma 2 Academic Program GCP Credit Award.
SA is supported by a Newton Trust Grant. MB acknowledges financial support via the Open 
Philantropy Foundation funded by the Good Ventures Foundation.
This work was supported by the German Research Foundation (DFG): SFB 1233, Robust Vision: Inference Principles and Neural Mechanisms, TP4, project number: 276693517 and the UKRI grant: Turing AI Fellowship EP/W002981/1.
MB is a member of the Machine Learning Cluster of Excellence, funded by the Deutsche Forschungsgemeinschaft (DFG, German Research Foundation) under Germany’s Excellence Strategy – EXC number 2064/1 – Project number 390727645.

{
    \small
    \bibliographystyle{plainnat}
    \bibliography{main}
}

\appendix

\newpage
\section*{NeurIPS Paper Checklist}

\begin{enumerate}

\item {\bf Claims}
    \item[] Question: Do the main claims made in the abstract and introduction accurately reflect the paper's contributions and scope?
    \item[] Answer: \answerYes{}
    \item[] Justification: We have highlighted the main efficient evaluation claim in the title, abstract, introduction and results section. We back up our main claim with our experiments in~\cref{sec:results}.

\item {\bf Limitations}
    \item[] Question: Does the paper discuss the limitations of the work performed by the authors?
    \item[] Answer: \answerYes{}
    \item[] Justification: We have included a limitations section in~\cref{limitations}.

\item {\bf Theory Assumptions and Proofs}
    \item[] Question: For each theoretical result, does the paper provide the full set of assumptions and a complete (and correct) proof?
    \item[] Answer: \answerYes{}
    \item[] Justification: Yes, we provide our proofs in~\cref{proof4.2,proof-4.1}.

    \item {\bf Experimental Result Reproducibility}
    \item[] Question: Does the paper fully disclose all the information needed to reproduce the main experimental results of the paper to the extent that it affects the main claims and/or conclusions of the paper (regardless of whether the code and data are provided or not)?
    \item[] Answer: \answerYes{}
    \item[] Justification: We transparently include all our experimental settings required to reproduce our findings in the main paper. We also include pseudo-code for the algorithms used in \textit{Sort \& Search} in~\cref{listing1}. Further, we release our \textit{Sort \& Search} 
(anonymized) codebase for ensuring reproducibility, in the supplementary material.

\item {\bf Open access to data and code}
    \item[] Question: Does the paper provide open access to the data and code, with sufficient instructions to faithfully reproduce the main experimental results, as described in supplemental material?
    \item[] Answer: \answerYes{}
    \item[] Justification: Yes, we provide the code in the supplementary material.

\item {\bf Experimental Setting/Details}
    \item[] Question: Does the paper specify all the training and test details (e.g., data splits, hyperparameters, how they were chosen, type of optimizer, etc.) necessary to understand the results?
    \item[] Answer: \answerYes{}
    \item[] Justification: Yes, we include all the experimental setup details in~\cref{experimental-details}.

\item {\bf Experiment Statistical Significance}
    \item[] Question: Does the paper report error bars suitably and correctly defined or other appropriate information about the statistical significance of the experiments?
    \item[] Answer: \answerYes{}
    \item[] Justification: We report error bars with standard error of the mean, for our main results in~\cref{fig:main-results-fig}. 

\item {\bf Experiments Compute Resources}
    \item[] Question: For each experiment, does the paper provide sufficient information on the computer resources (type of compute workers, memory, time of execution) needed to reproduce the experiments?
    \item[] Answer: \answerYes{} 
    \item[] Justification: We mention the total number of GPU hours required for our entire model evaluation using the standard full-evaluation vs. using our \textit{Sort \& Search method}, highlighting the cost savings from our method, in the main paper.

\item {\bf Code Of Ethics}
    \item[] Question: Does the research conducted in the paper conform, in every respect, with the NeurIPS Code of Ethics \url{https://neurips.cc/public/EthicsGuidelines}?
    \item[] Answer: \answerYes{} %
    \item[] Justification: Yes, to the best of our knowledge and abilities.

\item {\bf Broader Impacts}
    \item[] Question: Does the paper discuss both potential positive societal impacts and negative societal impacts of the work performed?
    \item[] Answer: \answerNA{} %
    \item[] Justification: Our paper can be considered as foundational research and not tied to particular applications, let alone deployments. We do not immediately see any negative societal impact. A positive societal impact might be faster and cheaper evaluation available for developing benchmarks.

\item {\bf Safeguards}
    \item[] Question: Does the paper describe safeguards that have been put in place for responsible release of data or models that have a high risk for misuse (e.g., pretrained language models, image generators, or scraped datasets)?
    \item[] Answer: \answerNA{} %
    \item[] Justification: We only work with existing datsets and models, and do not release any new datasets or models.

\item {\bf Licenses for existing assets}
    \item[] Question: Are the creators or original owners of assets (e.g., code, data, models), used in the paper, properly credited and are the license and terms of use explicitly mentioned and properly respected?
    \item[] Answer: \answerYes{} %
    \item[] Justification: We have cited the original datasets and code for correct credit assignment in~\cref{tab:lifelong-benchmarks-table}.

\item {\bf New Assets}
    \item[] Question: Are new assets introduced in the paper well documented and is the documentation provided alongside the assets?
    \item[] Answer: \answerYes{} %
    \item[] Justification: The code is documented and released under GPL3 license.

\item {\bf Crowdsourcing and Research with Human Subjects}
    \item[] Question: For crowdsourcing experiments and research with human subjects, does the paper include the full text of instructions given to participants and screenshots, if applicable, as well as details about compensation (if any)? 
    \item[] Answer: \answerNA{} %
    \item[] Justification: The paper does not involve crowdsourcing nor research with human subjects

\item {\bf Institutional Review Board (IRB) Approvals or Equivalent for Research with Human Subjects}
    \item[] Question: Does the paper describe potential risks incurred by study participants, whether such risks were disclosed to the subjects, and whether Institutional Review Board (IRB) approvals (or an equivalent approval/review based on the requirements of your country or institution) were obtained?
    \item[] Answer: \answerNA{} %
    \item[] Justification: The paper does not involve crowdsourcing nor research with human subjects.

\end{enumerate}

\clearpage
\onecolumn

\setcounter{page}{1}
\doparttoc %
\faketableofcontents %

\part{Appendix} %
\parttoc %
\clearpage

\section{Domain-Agnosticity of Lifelong Benchmarks}

Our framework is domain-agnostic. All our framework requires is an \( A \) matrix constructed using any binary metric, with rows representing samples and columns representing evaluated models. We discuss several applications of our framework across a range of metrics:

\begin{itemize}
    \item \textbf{Language Models:} Our framework can be directly applied to multiple-choice question evaluations popular for benchmarking language model evaluations. The metric here is exact match or near-exact match, a binary metric that perfectly aligns with our framework requirements.

    \item \textbf{Dense Prediction Tasks or Multi-label Classification:} For pixel-wise prediction tasks or multi-label classification, our framework can be adapted by flattening the predictions of each sample. In this approach, each sample contributes an array of binary values to the \( A \) matrix instead of a single value. Extending the search algorithm is straightforward: if a point is sampled, all associated values are sampled and annotated.

    \item \textbf{Tasks with Real-valued Predictions:} For tasks such as regression or BLEU score evaluations, our framework can be used after applying a thresholding operation, which converts predictions into binary values (above or below the threshold). While this adaptation allows the framework to function, it restricts the output predictions to the binary threshold level. 
    \end{itemize}

Followup work \citep{anonymous2024democratizing} does extend lifelong benchmarks to evaluating language models and multimodal language models and tackles the unique challenges faced in those cases.
\clearpage

\section{Towards Truly Lifelong Benchmarks: A Conceptual Framework}

In the main paper, we introduced the concept of \textit{lifelong model evaluation} through the idea of ever-expanding large-scale benchmarks, termed \textit{Lifelong Benchmarks}. Although \textit{Lifelong-ImageNet} and \textit{Lifelong-CIFAR10} are large-scale, they are not truly lifelong as they do not expand over time. These benchmarks primarily test the efficacy of our \textit{Sort \& Search} method due to their large size.

To achieve true lifelong benchmarks, we need continuous acquisition of samples and models, allowing for continual growth (as detailed in~\cref{sec:preliminaries}). In~\cref{fig:teaser}, we illustrate how lifelong benchmarking differs from the standard benchmarking approaches currently used in machine learning research. A great example of lifelong benchmark is \citep{anonymous2024democratizing} for language models, and multimodal foundation models.

\begin{figure}[h]
    \hspace{-0.45cm}
\includegraphics[width=1.025\linewidth]{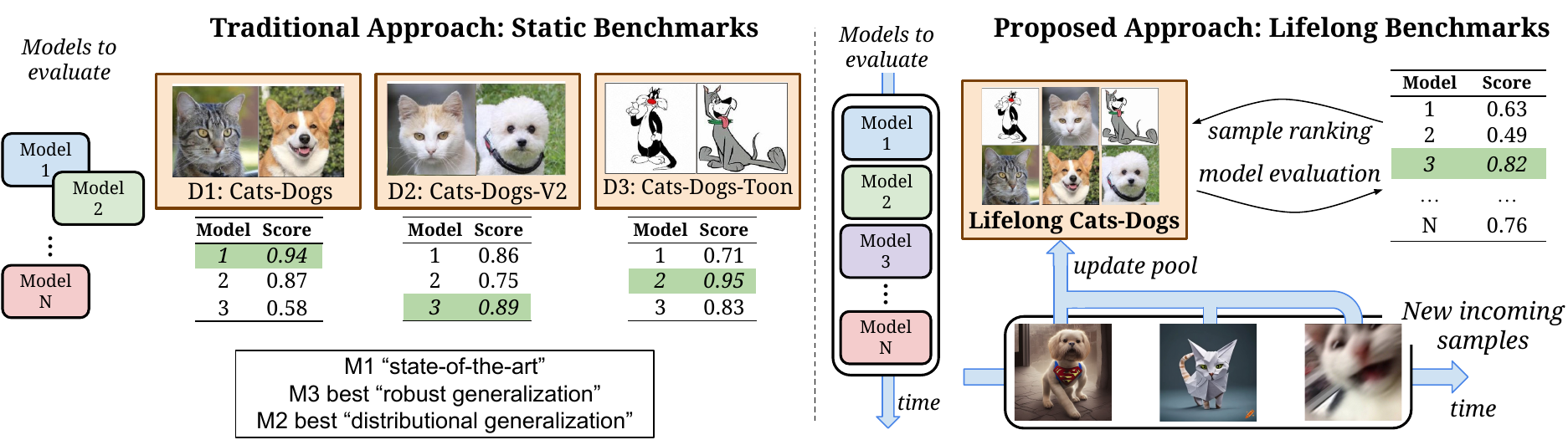}
    \vspace{-0.2cm}
    \caption{\textbf{Static vs Lifelong Benchmarking.} 
    \textit{(Top)} Static benchmarks incentivise machine learning practitioners to overfit models to specific datasets, weakening their ability to assess generalisation. 
    \textit{(Bottom)} We conceptualise \textit{Lifelong Benchmarks} as an alternative paradigm---ever-expanding pools of test samples that resist overfitting while retaining computational tractability.}
    \vspace{-0.45cm}
    \label{fig:teaser}
\end{figure}

\clearpage

\section{Lifelong-ImageNet and Lifelong-CIFAR10: Details}
\label{sec:lifelong-benchmarks}

In this section, we detail the creation of our two lifelong benchmarks.

\noindent\textbf{Considerations.} We aim to establish lifelong benchmarking as a standard evaluation protocol in computer vision. To demonstrate this, we considered two popular datasets as our basis: CIFAR10~\cite{krizhevsky2009learning} and ImageNet~\cite{deng2009imagenet}. We chose them due to (1) their widespread adoption in prior art, (2) the diverse set of models trained on them, and (3) the presence of numerous dataset variants with the same set of labels, encompassing distribution shifts~\cite{barbu2019objectnet}, temporal variations~\cite{shirali2023makes}, and adversarial samples~\cite{hendrycks2021natural}.

Note that while our current lifelong benchmarks are based on two datasets, our framework can generally be applied to any broader range of datasets. We describe the precise construction of our datasets below. See~\cref{tab:lifelong-benchmarks-table} for key statistics and a detailed breakdown.%

\noindent\textbf{Lifelong-CIFAR10.} We combine 31 domains of different CIFAR10-like datasets comprising samples applied with various synthetic distribution shifts, synthetic samples generated by diffusion models, and samples queried from different search engines using different colors and domains. We deduplicate our dataset to ensure uniqueness and downsample all images to the standard CIFAR10 resolution of $32\times32$. Our final dataset consists of 1.69 million samples.

\noindent\textbf{Lifelong-ImageNet.} We source our test samples from ImageNet and its corresponding variants. Similar to \textit{Lifelong-CIFAR10}, our benchmark is designed for increased sample diversity (43 unique domains) while operating on the same ImageNet class set. We include samples sourced from different web-engines and generated using diffusion models. Our final Lifelong-ImageNet contains 1.98 million samples.

\begin{table*}[h]
\centering

\caption{\textbf{Overview of our Lifelong Benchmarks.} We list the constituent source datasets (deduplicated) and their statistics for constructing our lifelong benchmarks here. Our benchmarks encompass a wide-range of natural and synthetic domains, sources and distribution shifts, making for a comprehensive lifelong testbed.}
\resizebox{1\textwidth}{!}{
\begin{tabular}{l|c|c|c|c|c|c}
\toprule
\textbf{Dataset} & \textbf{\#Test Samples} & \textbf{\#Domains} & \textbf{\#Unique Sources} & \textbf{Synthetic/Natural} & \textbf{Corrupted/Clean} & \textbf{License} \\
\midrule
\textit{Lifelong-CIFAR10} & 1,697,682 & 31 & 9 & Both & Both \\
\quad CIFAR10.1~\cite{recht2018cifar} & 2,000 & 1 & 1 & Natural & Clean & {MIT License} \\
\quad CIFAR10~\cite{krizhevsky2009learning} & 10,000 & 1 & 1 & Natural & Clean & {Unknown} \\
\quad CIFAR10.2~\cite{lu2020harder} & 12,000 & 1 & 1 & Natural & Clean & {Unknown}\\
\quad CINIC10~\cite{darlow2018cinic} & 210,000 & 1 & 1 & Natural & Clean & {MIT License} \\
\quad CIFAR10-W~\cite{sun2023cifar} & 513,682 & 11 & 8 & Both & Clean & {MIT License}\\
\quad CIFAR10-C~\cite{hendrycks2020measuring} & 950,000 & 19 & 1 & Natural & Corrupted & {Apache-2.0 License}\\
\midrule
\textit{Lifelong-ImageNet} & 1,986,310 & 43 & 9 & Both & Both \\
\quad ImageNet-A~\cite{hendrycks2021natural} & 7,500 & 1 & 3 & Natural & Clean & {MIT License} \\
\quad ObjectNet~\cite{barbu2019objectnet} & 18,514 & 1 & 1 & Natural & Clean & {Custom License} \\
\quad OpenImagesNet~\cite{kuznetsova2020open} & 23,104 & 1 & 1 & Natural & Clean & {MIT License} \\
\quad ImageNet-V2~\cite{recht2019imagenet} & 30,000 & 1 & 1 & Natural & Clean & {MIT License} \\
\quad ImageNet-R~\cite{hendrycks2021many} & 30,000 & 13 & 1 & Natural & Clean & {MIT License} \\
\quad ImageNet~\cite{deng2009imagenet} & 50,000 & 1 & 1 & Natural & Clean & {Custom Non-Commercial} \\
\quad Greyscale-ImageNet~\cite{taori2020measuring} & 50,000 & 1 & 1 & Natural & Clean & {MIT License} \\
\quad StylizedImageNet~\cite{geirhos2018imagenet} & 50,000 & 1 & 1 & Synthetic & Corrupted & {MIT License} \\
\quad ImageNet-Sketch~\cite{wang2019learning} & 50,889 & 1 & 1 & Natural & Clean & {MIT License} \\
\quad SDNet~\cite{bansal2023leaving} & 98,706 & 19 & 1 & Synthetic & Clean & {MIT License} \\
\quad LaionNet~\cite{shirali2023makes} & 677,597 & 1 & 1 & Natural & Clean & {Unknown} \\
\quad ImageNet-C~\cite{hendrycks2019benchmarking} & 900,000 & 19 & 1 & Natural & Corrupted & {Apache-2.0 License} \\
\bottomrule
\end{tabular}}
\vspace*{-0.3cm}
\label{tab:lifelong-benchmarks-table}
\end{table*}

\clearpage

\section{Analysis: How Consistently Do Models Follow Global Ranking?}
\label{appendix:d}
In all our main results using \textit{Sort \& Search}, we use a single ranking order for all new models. A natural question arises: \textit{Are all models consistent in their agreement of what is considered a difficult sample, and what is easy?} Perhaps, there could be a clique of models that all agree that certain samples are hard, whereas other models that do not---is this the case or is one ranking order truly sufficient?

\begin{figure}[h]
\vspace*{-0.1cm}
\centering
    \subfigure[\textit{Spearman Correlations are all positive}]{
        \centering
        \includegraphics[width=0.6\textwidth]{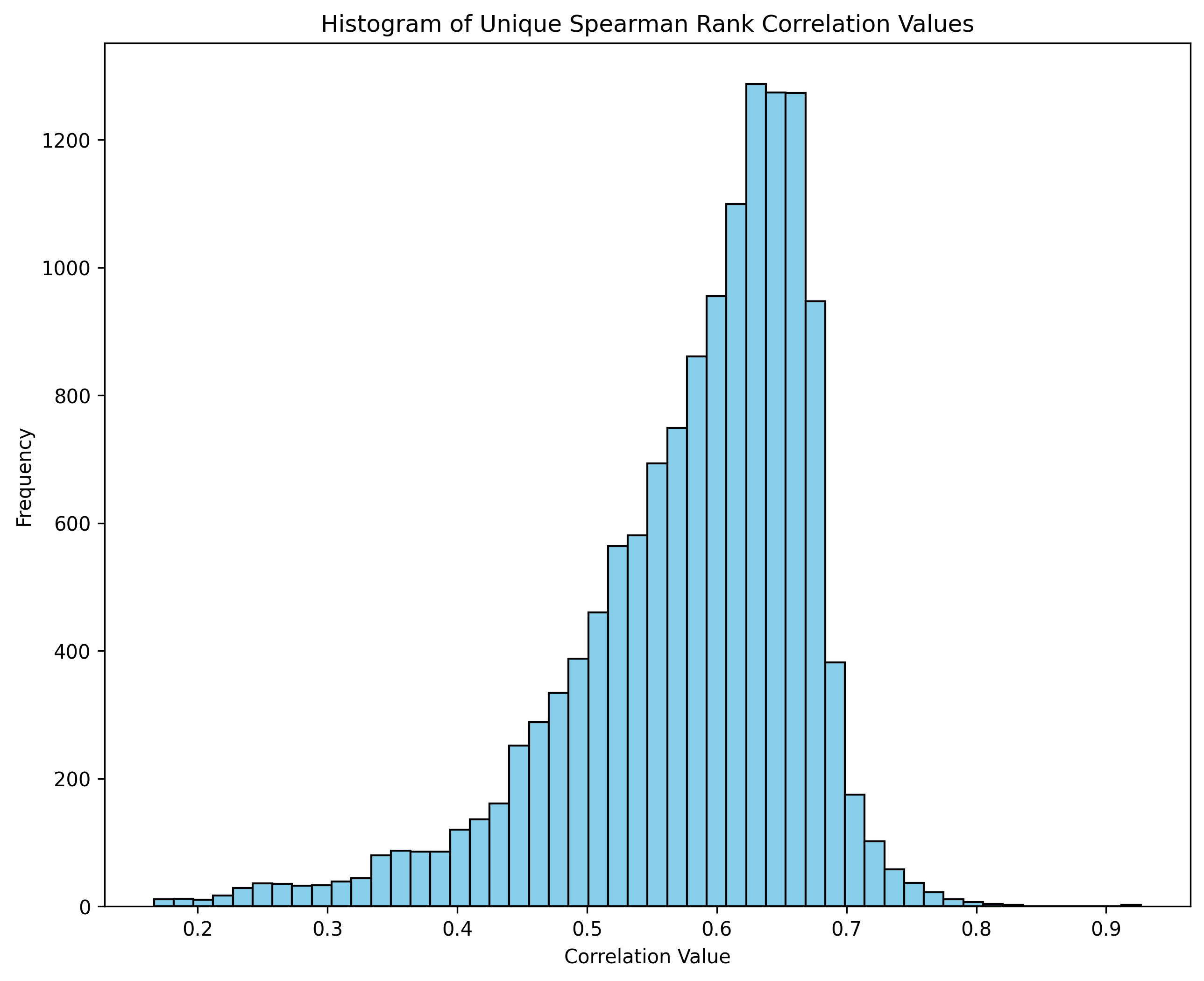}
        \label{fig:positive-spearman-correlations}
    }\\
    \subfigure[\textit{Spearman Correlation Matrix}]{
        \centering
        \includegraphics[width=0.85\textwidth]{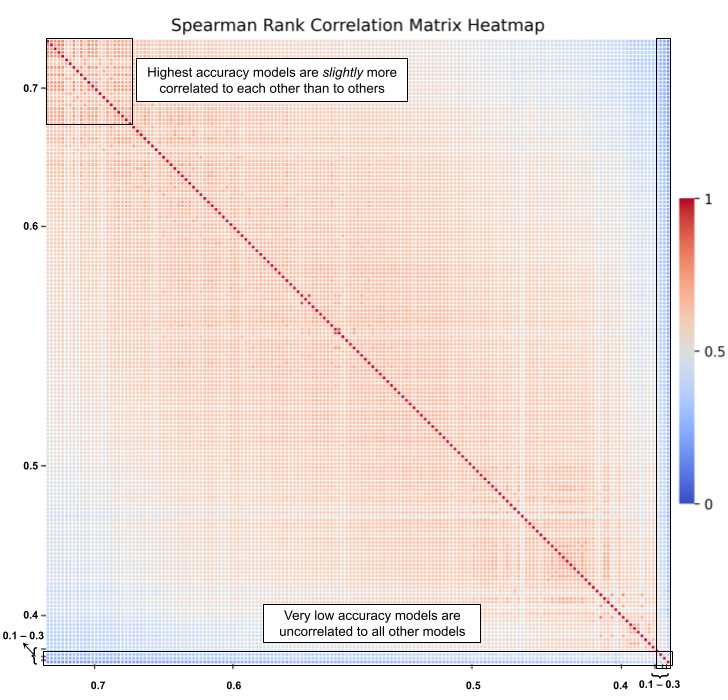}
        \label{fig:model-spearman-corr-matrix}
    }
    \label{fig:spearman-corrs}
    \caption{\textbf{Correlation Analysis between Model Predictions on \textit{Lifelong-ImageNet}.} \textit{(a)} We note that all correlations between model predictions are positive, signifying the similarities between all models despite their diverse sizes, architectures, and inductive biases. \textit{(b)} We show the cross-correlation matrix between all model predictions---the x and y axes showcase models, sorted by their accuracies. The floating point numbers on the x and y axes are the model accuracies---the highest accuracy models ($70\%$ accuracy) appear at the top and left, while the lowest accuracy models appear at the bottom and right ($10\%-30\%$).}
    \vspace*{-0.4cm}
\end{figure}

To justify this choice of considering a single ranking order, we run a simple experiment.  We compute the cross-correlation matrix between each of the 167 models with each other on the predictions across the entire Lifelong-Imagenet benchmark (1,986,310 test samples) where models are sorted in descending order of accuracy i.e. the highest accuracy model is plotted in the first row/column and the least accurate model is plotted last. Note that the 167 models are extremely distinct in architecture, backbone, training datasets, data augmentation, normalization, and loss functions (see full list in~\cref{all-imagenet-models}). The cross-correlation matrix plot is depicted in~\cref{fig:model-spearman-corr-matrix}.

\noindent\textbf{Reading the plot.} The colorbar is important here, it ranges from 0 to 1---we implicitly only look at positively correlated models. We verified that all the correlation values were positive by plotting the distribution of correlation values in~\cref{fig:positive-spearman-correlations}---hence, there are no models that are totally anti-correlated with each other. Now, in the correlation matrix, if there exist certain ``model cliques''---certain sets of models that are highly correlated with each other and anti-correlated with all others---we would observe disconnected components, systematically isolated squares.

\noindent\textbf{Result.} From the correlation plot, we do not find any clear evidence of model cliques. The only anomalous entries we could find are low performing models, whose predictions are uncorrelated with all other models as they are random. We observe slightly higher correlations between the top performing models, but note that this is confounded by their high accuracy—if models are highly accurate, their correlations are likely to be higher by chance alone (since there are more ones in the prediction arrays and hence higher chance of intersecting predictions). However, no distinct cliques were found.

Therefore, this analysis further gives us a strong indication that model predictions are highly correlated, hence justifying our choice of using a single ranking function.

\noindent\textbf{Brief Discussion.} While our analysis suggests that model predictions are highly correlated, we point out that this analysis is done for a varied set of models purely for the task of image classification. We do acknowledge that other tasks like retrieval or captioning might yield different correlation structures, such that there might be different model cliques emerging. Such a structure would then potentially impact our \textit{Sort} algorithm. Hence, while our current results suggest that the sorted order of difficulty generalizes to new incoming models holds fairly robustly, our method might still be sensitive to task deviations, labeling errors etc. We leave a further exploration of this for future work.

\clearpage

\section{Analysis: Changing the metric from MAE to a Rank Correlation}

In all our main results using \textit{Sort \& Search}, we use the mean-absolute-error (MAE) to evaluate the effectiveness of our framework. 

While MAE serves as a useful proxy metric for algorithm development, \textit{it is not a necessary requirement to provide practical applications}. In particular, for many use-cases, it is the \textit{ranking of the models, rather than their absolute metrics, that are of primary importance} for informing downstream decisions about which model to use.

\begin{figure}[h]
    \centering
    \includegraphics[width=0.5\linewidth]{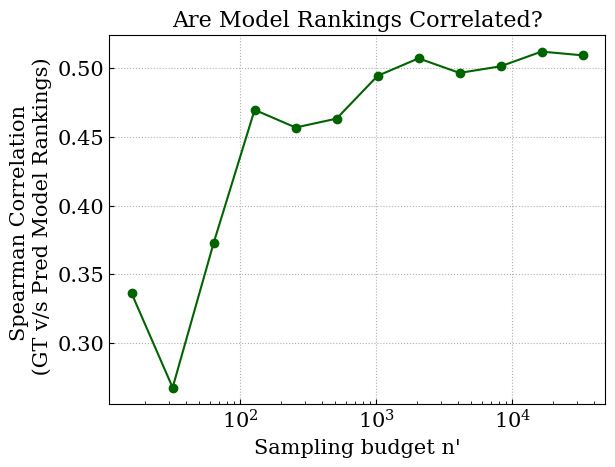}
    \caption{We change the metric for evaluating the efficacy of \textit{Sort \& Search} from MAE to Spearman correlation---we observe consistently high correlations of $0.5$ or greater.}
    \label{mae-metric-fig}
\end{figure}

To illustrate a practical application, we examine whether \textit{Sort \& Search} preserves the ranking of models at high sampling efficiency. Specifically, we conducted an experiment by changing the evaluation metric from MAE to Spearman correlation between the rankings of $25,250$ models using \textit{Sort \& Search} and the rankings obtained after full sample evaluation on\textit{ Lifelong-CIFAR10}. The results, presented in~\cref{mae-metric-fig}, show a consistently high correlation of $0.5$. We believe this demonstrates the framework's applicability for practical use-cases.

\clearpage

\section{Does Error Accumulate with Consecutive Additions of New Models/Data?}

In this section, we argue that the errors should not accumulate with consecutive addition of new models or data. The core intuition lies in the fact that sequential updates to $\mathbf{P}_t^*$ when made with the predicted vector $\mathbf{y}_{t+1}$ will necessarily preserve the same permutation, i.e.  $\mathbf{P}_{t+1}^* = \mathbf{P}_t^*$ as $\mathbf{y}_{t+1}$ strictly follows $\mathbf{P}_t^*$ itself, adding an error of 0.

\textbf{Detailed Explanation}. Considering the case where a new model is presented in which $\mathbf{A} \in \{0,1\}^{|\mathcal{M} \times |\mathcal{D}|}$ where $|\mathcal{M}|$ is the number of models and $|\mathcal{D}|$ the number of data samples. We solve Equation \ref{eq:1} by alternating the solution between solving for $\mathbf{y}$ given the permutation $\mathbf{P}$ and $\mathbf{P}$ given the prediction $\mathbf{y}$. For ease, and without loss of generality, consider the problem when solving Equation \ref{eq:1} repetitively for a sequence of new samples. A natural question is: Do we need to re-optimize for $\mathbf{P}_t$ and update $\mathbf{A}$ with the new ranked prediction vectors $\mathbf{y}_t$ for every timestep?  

Our algorithm \textit{Sort \& Search}, while might not be achieving global optimality in both $\mathbf{P}$ and $\mathbf{y}$, however, we have a guarantee that if $\mathbf{P}^*_t$ and $\mathbf{y}^*_t$ are the solutions of \textit{Sort \& Search} at step t, then $\mathbf{P}^*_t = \mathbf{P}_{t+1}^*$ at every step and we do not require recomputing $\mathbf{P}^*_{t+1}$ optimizing $[\mathbf{A}_{t}|\mathbf{y}^*_t] \mathbf{P}_{t+1}$ after every addition where $[\mathbf{A}_{t}|\mathbf{Y}^*_t]$ is the concatenation of $\mathbf{A}_t$ with the new sample $\mathbf{Y}_{t+1}$.  This is since \textit{Sort \& Search} only requires access to the sum over columns of $[\mathbf{A}_{t}|\mathbf{Y}^*_t]$ (see Algorithm \circled{1}). The core intuition underlying this result is that at the new step $t+1$ the vector $\mathbf{y}_{t+1}^*$ has a structure of ones followed by zeros ordered according to the optimal permutation $\mathbf{P}^{*}_{t+1}$ that orders samples from ``easiest'' to ``hardest'' following the structure in $\mathbf{AP}^*_t$. Hence, adding it to the sum preserves the ordering of elements (if ties are broken in the manner of the old ordering).

\begin{figure}[h]
    \centering
    \includegraphics[width=0.5\linewidth]{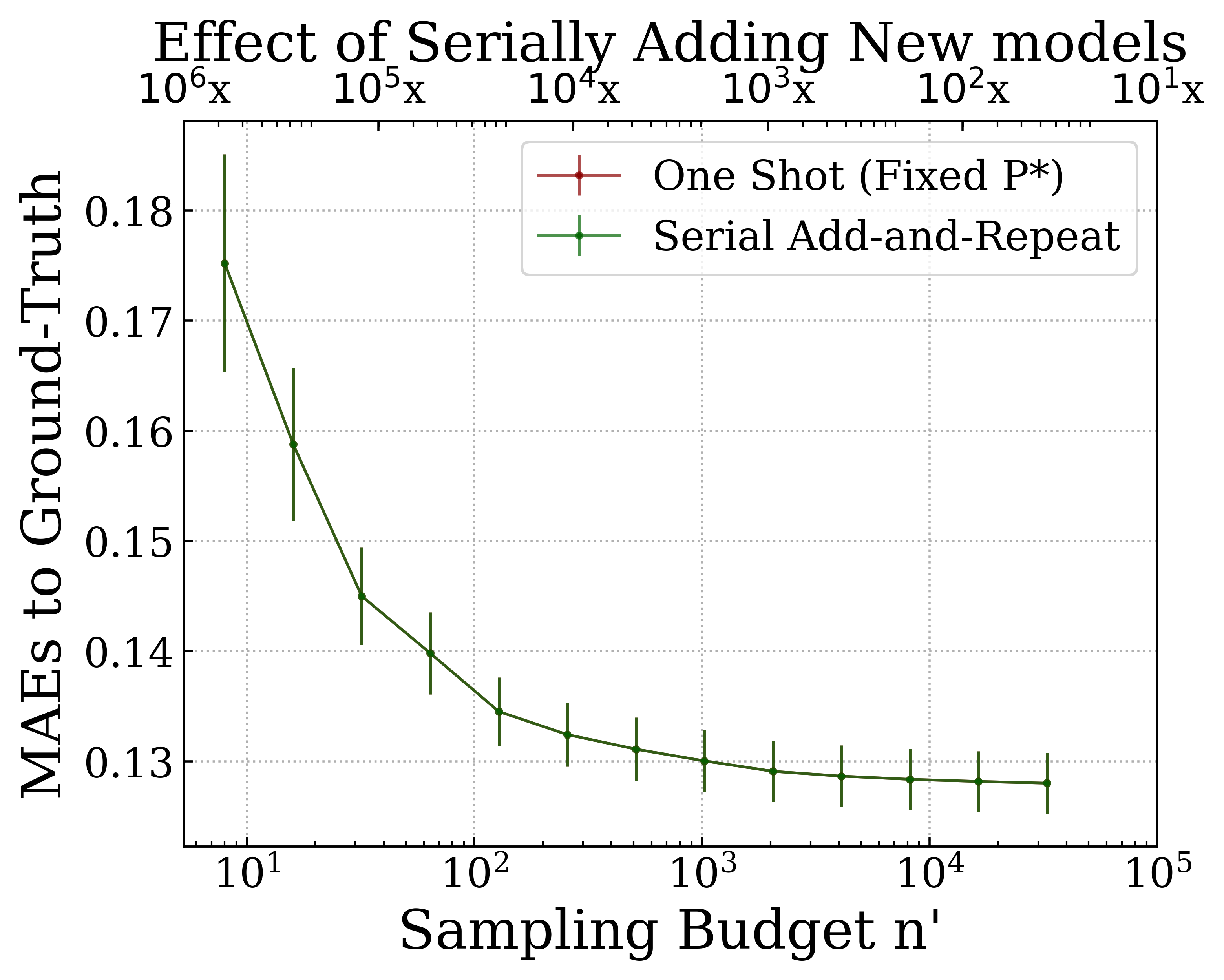}
    \label{fig:new-fig}
\end{figure}

\textbf{Empirical Backing.} We conducted experiments by adding new models serially and using the \textit{Sort \& Search} predictions as ground truth for further model additions on \textit{Lifelong-ImageNet} dataset. The results are presented in the \cref{fig:new-fig}. We observe the errors \textit{do not accumulate} with consecutive additions, exactly the same model order is preserved -- confirming our insight empirically. 

\clearpage

\section{Extended Related Work}
\label{extended-related-work}

In this section, we expand on the brief literature review from~\cref{related-work-main-paper} for a more expansive coverage of related topics.

\textbf{Comprehensive Benchmarks.} Benchmarking has become ubiquitous in the machine learning world in the last few years~\cite{raji2021ai}. It has gained further traction in the recent past with the release of foundation models like GPT-4~\cite{bubeck2023sparks} and CLIP~\cite{radford2021learning}. A popular direction taken by efforts like GLUE~\cite{wang2018glue}, BigBench~\cite{srivastava2022beyond}, HELM~\cite{liang2022holistic} \textit{etc.}, is to have a benchmark of benchmarks, reporting the average accuracy over the constituent datasets. This approach now spans across several domains including fact-based question-answering~\cite{hendrycks2020measuring}, language understanding~\cite{wang2019superglue}, zero-shot classification of vision-language models~\cite{gadre2023datacomp}, large-scale vision model evaluation~\cite{zhai2019visual}, multi-modal model evaluation~\cite{yue2023mmmu,zhou2022vlue}, and text-to-image generation~\cite{bakr2023hrs,lee2023holistic}. Despite these benchmarks having vast coverage of testing concepts, the obvious downsides are two-fold: (1) they are static in nature and hence can always be susceptible to test-set contamination~\cite{magar2022data}, and (2) their large sizes renders them very expensive to run full model evaluations on.

\textbf{Adversarial Dynamic Benchmarks.} One necessary aspect essential for lifelong benchmarks is collecting harder samples, which has been pursued by two strands of works. Adversarial methods to augment benchmarks \cite{wallace2022analyzing, nie2019adversarial, kiela2021dynabench, potts2020dynasent, shirali2022theory}
aim to automatically curate samples that all tested models reliably fail on. These methods usually involve an iterative optimisation procedure to find such adversarial samples. The second strand of work in curating adversarial samples are efforts revolving around red-teaming \cite{ganguli2022red, perez2022red} that aim to explicitly elicit certain sets of behaviours from foundation models; primarily these approaches look at the problem of adversarial benchmarking from a safety perspective. Further, a host of benchmarks that aim to stress-test models are making their way on the horizon---their primary goal is to create test sets for manually discovered failure modes~\citep{yuksekgonul2022and,parcalabescu2021valse,thrush2022winoground,udandarao2023visual,hsieh2023sugarcrepe,kamath2023text,bitton2023breaking,bordes2023pug}. However, while they are sample efficient, they are criticized as unfair. To mitigate this, a strand of automatic error discovery \cite{chen2023hibug, eyuboglu2022domino, wiles2022discovering, peychev2023automated} or their human-in-the-loop variants \cite{wang2021prioritizing, d2022spotlight, gao2023adaptive} have been developed. This is complementary to our work, as we primarily explore model testing. 

\textbf{Active Testing.} Efforts such as \cite{ji2021active,kossen2021active,kossen2022active,zhao2024flasheval} aim to identify
``high-quality'', representative test instances from a large amount of unlabeled data, which can reveal more model
failures with less labeling effort. The key assumption underlying these works is that they assume access to a host of unlabeled data at a relatively cheap cost. However, they assume that the cost of label acquisition is a bottleneck. However, these assumptions can break down when doing multiple forward passes on a single batch of data with a large-scale foundation model is necessitated. Albeit similar in spirit to the task of actively acquiring a subset of samples for testing models, an important distinction of our method is that we want to minimise the number of forward-passes through a model---we believe that the cost of running a model on several test samples is substantial, and hence needs to be reduced for efficient evaluation in terms of time, resources and capital.

\textbf{Ideas for Replacing Benchmarks.} Recently, there have been a surge of methods introducing creative ways of benchmarking models~\cite{liao2021we, roelofs2019meta, kaplun2022deconstructing,gardner2020evaluating,rodriguez2021evaluation,rofin2022vote,mania2019model,hutchinson2022evaluation,bowman2021will,tian2023learning,ott2022mapping,garrido2023rankme,roelofs2019meta,rodriguez2021evaluation} including hosted competitions~\cite{blum2015ladder}, self-supervised evaluation~\cite{jain2023bring} and newer metrics~\cite{geirhos2020beyond}. Further, recently ELO style methods have been gaining a lot of attention~\cite{bitton2023visit,zheng2023judging} due to their scalability of deployment to millions of users in a peer-to-peer manner. The ELO algorithm is used to compute ranks for different models based on human-in-the-loop preferences. However, despite its utility ELO is heavily dependent on the choice of user inputs and can be a very biased estimator of model rankings~\cite{shi2023chef}. Another interesting idea proposed by~\cite{corneanu2020computing} is to assume access to the pre-training data of models and compute topological maps to give predictions of test error; this however requires running expensive forward passes over the training data or modifying the training protocol, which might be not be scalable to pre-trained models.

\textbf{Computerized Adaptive Testing.} Computerized Adaptive Testing (CAT) is a framework that allows for efficient testing of human examinees. The idea is to lower the burden of students taking tests by only asking them a subset of questions from the entire pool. There have been few main directions of solutions: model-agnostic strategies for selection \cite{bi2020quality}, bi-level optimization \cite{ghosh2021bobcat, zhuang2022fully, feng2023balancing}, multi-objective optimization \cite{mujtaba2021multi, huang2019exploring, wang2023gmocat}, retrieval-augmented adaptive search \cite{yu2023sacat}. One key challenge in CAT is the lack of a stable ground-truth. Since the goal in CAT is to estimate the proficiency of an examinee, and the examinee's true ground-truth proficiency is not provided, how would one evaluate the true proficiency of an examinee? Thereby, existing CAT methods cannot explicitly optimise for predicting ability directly \textit{i.e.} they cannot do exact ability estimation. Hence, CAT methods are not usually guaranteed to converge to the true examinee abilities under certain conditions. The biggest distinction of our work from CAT is the access to the ground-truth targets for the tasks we consider. In both \textit{Lifelong-ImageNet} and \textit{Lifelong-CIFAR10}, we have access to the ground-truth and hence can compute grounded metrics that can be optimised towards, unlike in CAT, where every method has to inherently be label-free.

\textbf{Curriculum Learning.} This refers to the problem of finding a curriculum of input samples such that the optimisation objective of an algorithm becomes easier. The most intuitive explanation from curriculum learning comes from how humans learn~\cite{khan2011humans}. In the context of machine learning, the idea behind curriculum learning is to find the ``difficulty'' of samples, where difficulty is usually defined in terms of the ease of classifying that sample correctly. Some recent works in this direction utilise estimating variance of gradients~\cite{agarwal2022estimating} and other information theoretic properties~\cite{ethayarajh2022understanding} to estimate sample difficulty. These approaches are complementary to our \textit{Sum} component in \textit{S\&S} since these can be easily integrated into our framework directly. 

\clearpage

\section{Proof of Theorem 3.1}

\begin{theorem*}
    \textbf{Optimality of $\mathbf{Y}$ given $\mathbf{P}$}. For any given $\mathbf{a}_i \in \{0,1\}^{1 \times n}$ and $ \mathbf{P}$, DP-Search returns an ordered prediction vector $\mathbf{y}_i \in \{0,1\}^{1 \times n}$ which is a global minimum of $\|\mathbf{a}_i\mathbf{P} - \mathbf{y}_i \|_1$, where being an ordered prediction vector implies that if $\mathbf{y}_{j} = 1$ then $\mathbf{y}_{j'} = 1 \forall j' \leq j$. Moreover, if $\mathbf{y}_{j} = 0$, then $\mathbf{y}_{j'} = 0 ~~\forall j' \geq j$. 
\end{theorem*}

\label{proof4.2}
\begin{proof}
First, we reduce the problem from~\cref{eq:1} to the following:
\begin{equation}
        \begin{aligned}
        &\mathbf{y'}^*  =  \text{argmin}_{\mathbf{y'}} 
  \| \mathbf{a'} \mathbf{P^*} - \mathbf{y'}\| ~~  \\
  & \text{if } ~~~~~ 
    \mathbf{y'}_{j} = 1 \text{, then } \mathbf{y'}_{j'} = 1 ~~ \forall j' \leq j,
    & \text{and if} ~~~~~ \mathbf{y'}_{j} = 0 \text{, then } \mathbf{y'}_{j'} = 0 ~~\forall j' \geq j. \\
\label{equation:supp3}
        \end{aligned}
\end{equation}
Note that $\mathbf{y'}$ essentially constructs a vector, $\mathbf{y}_i'$, of all ones up to some index $i$ with the rest being zero . Let $\mathbf{b} = \mathbf{a'}\mathbf{P^*}$ be the sorted vector according to the permutation matrix. Thus, the objective function has the following error:
\begin{equation}
\begin{aligned}
    \mathbf{e}(\mathbf{y'}_i) = \left(i - \sum_{k=1}^i \mathbf{b}_k\right) + \sum_{k=i+1}^{n}\mathbf{b}_k.
\end{aligned}
\end{equation}
Observe that the first term is the number of zeros to the left of index $i$ (inclusive) in $\mathbf{b}$, while the second term is the number of 1s in $\mathbf{b}$ to the right of index $i$.

\begin{proposition}
    If $\mathbf{y'}_i$ is a minimizer to Theorem 4.2, then, the following holds:
    \begin{align*}
        \sum_{k=i+1}^n \mathbf{b}_k \leq (n-i)- \sum_{k=i+1}^n \mathbf{b}_j.
    \end{align*}
    \label{prop:1}
\end{proposition}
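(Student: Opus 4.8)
The plan is to get the inequality directly out of optimality of the minimizer by testing it against one particular competitor: the threshold $i' = n$, i.e.\ the all-ones prediction vector $\mathbf{y'}_n = \mathbf{1}_n^\top$. Since $\mathbf{y'}_i$ minimizes the objective of \eqref{equation:supp3} over all prefix-of-ones vectors, and $\mathbf{y'}_n$ is one such vector, we must have $\mathbf{e}(\mathbf{y'}_i) \le \mathbf{e}(\mathbf{y'}_n)$; expanding both sides with the closed-form error expression $\mathbf{e}(\mathbf{y'}_i) = \big(i - \sum_{k=1}^i \mathbf{b}_k\big) + \sum_{k=i+1}^n \mathbf{b}_k$ derived just above the proposition will collapse to exactly the claimed bound (reading the summand on its right-hand side as $\mathbf{b}_k$).

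Concretely, first I would observe that for $i' = n$ the ``right-hand'' sum is empty, so $\mathbf{e}(\mathbf{y'}_n) = n - \sum_{k=1}^n \mathbf{b}_k = n - \sum_{k=1}^i \mathbf{b}_k - \sum_{k=i+1}^n \mathbf{b}_k$. Substituting this together with the formula for $\mathbf{e}(\mathbf{y'}_i)$ into $\mathbf{e}(\mathbf{y'}_i) \le \mathbf{e}(\mathbf{y'}_n)$ and cancelling the common term $\sum_{k=1}^i \mathbf{b}_k$ from both sides leaves $i + \sum_{k=i+1}^n \mathbf{b}_k \le n - \sum_{k=i+1}^n \mathbf{b}_k$, which rearranges to $\sum_{k=i+1}^n \mathbf{b}_k \le (n-i) - \sum_{k=i+1}^n \mathbf{b}_k$. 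Intuitively this says there are at least as many $0$s as $1$s among the entries to the right of the threshold $i$ in the sorted vector $\mathbf{b}$ — otherwise pushing the threshold all the way to $n$ would strictly decrease the error, contradicting minimality.

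There is essentially no hard step; the only things to be careful about are (i) confirming that $i' = n$ is an admissible threshold so the optimality comparison is legitimate, which it is since $\mathbf{y'}$ ranges over all prefix-of-ones vectors including the full one, and (ii) being precise that ``minimizer'' means a global minimizer of $\mathbf{e}$ over $i \in \{0,\dots,n\}$, which is what licenses the single inequality $\mathbf{e}(\mathbf{y'}_i) \le \mathbf{e}(\mathbf{y'}_n)$. I would expect the symmetric companion statement to be proved the same way by comparing instead with $i' = 0$, giving that there are at least as many $1$s as $0$s to the left of the threshold; together these two bounds pin down where the optimal threshold can lie and set up the argument that the $\arg\max$ of the prefix sums computed by \textit{DP-Search} lands on such a point.
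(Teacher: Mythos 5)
Your proof is correct, and it takes a genuinely more direct route than the paper's. You obtain the inequality from a single optimality comparison: the minimizer $\mathbf{y'}_i$ must do at least as well as the all-ones competitor $\mathbf{y'}_n$, whose error is just the total number of zeros in $\mathbf{b}$; expanding $\mathbf{e}(\mathbf{y'}_i) \le \mathbf{e}(\mathbf{y'}_n)$ and cancelling $\sum_{k=1}^i \mathbf{b}_k$ immediately yields $2\sum_{k=i+1}^n \mathbf{b}_k \le n-i$, which is the claim (correctly reading the paper's $\mathbf{b}_j$ in the statement as a typo for $\mathbf{b}_k$). The paper instead runs a pairwise exchange argument: it takes $j < i$ with $i$ satisfying the balance condition and $j$ violating it, computes $\mathbf{e}(\mathbf{y'}_j) - \mathbf{e}(\mathbf{y'}_i) = 2\sum_{k=j+1}^i \mathbf{b}_k - (i-j)$, and subtracts the two assumed inequalities to show this difference is nonnegative, concluding that thresholds satisfying the condition dominate those that do not. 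Your version is shorter, avoids the paper's restriction to $j<i$ (the symmetric case is never spelled out there), and proves the stated implication cleanly; the paper's pairwise form buys a slightly stronger dominance statement that it then leans on to argue the DP-Search argmax is globally optimal, which is the role your suggested companion comparison with $i'=0$ would play. As a proof of the proposition as literally stated, yours is complete and, if anything, tighter than the original.
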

\begin{proof}
Let $j < i$ and that $\mathbf{y'}_i$ and $\mathbf{y'}_j$ are feasible solutions for Theorem 4.2. However, let that $\mathbf{y'}_i$ be such that the inequality in Proposition \ref{prop:1} while it is not the case for $\mathbf{y'}_j$. Then, we compare the differences in the objective functions $\mathbf{e}(\mathbf{y'}_i)$ and $\mathbf{e}(\mathbf{y'}_j)$. We have that:
\begin{align*}
    \mathbf{e}(\mathbf{y'}_j) - \mathbf{e}(\mathbf{y'}_i) &= \left[\left(j - \sum_{k=1}^j \mathbf{b}_j\right) + \sum_{k=j+1}^n \mathbf{b}_k\right] - \left[\left(i - \sum_{k=1}^i \mathbf{b}_k\right) + \sum_{k=i+1}^n \mathbf{b}_k\right] \\
    & = 2 \sum_{k=j+1}^i \mathbf{b}_k - (i-j).
\end{align*}

However, we know from the assumptions that $2 \sum_{i+1}^n\mathbf{b}_k \leq n-i$ and that $2 \sum_{j+1}^n\mathbf{b}_k \ge n-j$. Subtracting the two inequalities we have $2 \sum_{k=j+1}^n \mathbf{b}_k \ge i-j$ which implies that $\mathbf{y'}(\mathbf{s}_j) \ge \mathbf{e}(\mathbf{y'}_i)$ which implies that $\mathbf{y'}_i$ is a better solution to any other $\mathbf{y'}_j$ not satisfying the inequality in Proposition \ref{prop:1}.
\end{proof}

The inequality condition in proposition \ref{prop:1} implies that for the choice of index $i$, the number of zeros in $\mathbf{a}$ to the right of index $i$ is more than the number of 1s to the right of index $i$. Since any solution $\mathbf{y'}_i$ either satisfies property in Proposition \ref{prop:1} or not. Moreover, since Proposition \ref{prop:1} demonstrated that the set of indices that satisfy this property are better, in objective value, than all those that do not satisfy it, then this condition achieves optimality.
\end{proof}

\clearpage

\section{Proof for Theorem 4.1}
\label{proof-4.1}
\begin{theorem*}
    Given any ground-truth vector $\mathbf{a}_{m+1}$, it is possible to construct a prediction vector $\mathbf{y}_{m+1}$ such that $E_{\text{agg}}(\mathbf{y}_{m+1},\mathbf{a}_{m+1}) = 0$ and $E(\mathbf{a}_{m+1}, \mathbf{y}_{m+1}) = 2 \text{min}(1 - \nicefrac{|\mathbf{a}_{m+1}|}{n}, \nicefrac{ 
 |\mathbf{a}_{m+1}|}{n}$)
\end{theorem*}

\begin{proof}
 Given $\mathbf{a}_{m+1}$, construct a the prediction vector $\mathbf{y}_{m+1}$, such that $E_{\text{agg}}(\mathbf{y}_{m+1},\mathbf{a}_{m+1}) = 0$ and $E(\mathbf{a}_{m+1}, \mathbf{y}_{m+1}) = 2. \text{min}(1 - \nicefrac{|\mathbf{a}_{m+1}|}{n}, \nicefrac{ 
 |\mathbf{a}_{m+1}|}{n}$)

\textbf{\textit{Construction:}} We first design construction for the prediction vector $\mathbf{y}_{m+1}$. Let us consider three cases: (i) $|\mathbf{a}_{m+1}| < 0.5$, (ii) $|\mathbf{a}_{m+1}| > 0.5$ and (iii) $|\mathbf{a}_{m+1}| = 0.5$.

\textit{Case 1} ($|\mathbf{a}_{m+1}| < 0.5$): We construct the prediction vector by first flipping all the indexes with value $1$ in $\mathbf{a}_{m+1}$ to $0$, resulting in MAE of $\nicefrac{|\mathbf{a}_{m+1}|}{n}$. Since, we are constrained to maintain the same $|\mathbf{a}_{m+1}|$, we can flip any $|\mathbf{a}_{m+1}|$ other indexes with values 0 to $1$. This is possible in this case as there are more 0s than 1s in $\mathbf{a}_{m+1}$. This results in MAE of $\nicefrac{|\mathbf{a}_{m+1}|}{n}$.  Taken together, they achieve the total MAE of $E = \nicefrac{2 |\mathbf{a}_{m+1}|}{n}$. 

\textit{Case 2} ($|\mathbf{a}_{m+1}| > 0.5$): We construct the prediction vector by first flipping all the indexes with value $0$ in $\mathbf{a}_{m+1}$ to $1$, resulting in an MAE of $1- \nicefrac{|\mathbf{a}_{m+1}|}{n}$. Since, we are constrained to maintain the same $|\mathbf{a}_{m+1}|$, we can flip any other index $1 -|\mathbf{a}_{m+1}|$ with values 1 to $0$. This is possible in this case as there are more 1s than 0s in $\mathbf{a}_{m+1}$. This results in an MAE of $1 - \nicefrac{|\mathbf{a}_{m+1}|}{n}$. Taken together, they achieve the total MAE of $E = 2.(1 - \nicefrac{|\mathbf{a}_{m+1}|}{n})$.

\textit{Case 3} ($|\mathbf{a}_{m+1}| = 0.5$): We construct the prediction vector by flipping all the indexes with value $0$ in $\mathbf{a}_{m+1}$ to $1$ and flipping all the indexes with value $1$ in $\mathbf{a}_{m+1}$ to $0$. This achieves the total MAE of $E = 1 = \nicefrac{2 |\mathbf{a}_{m+1}|}{n} = 2.(1 - \nicefrac{|\mathbf{a}_{m+1}|}{n})$.

This concludes the construction of the prediction vector $\mathbf{y}_{m+1}$.

\end{proof}
\clearpage
\section{167 Models used for Lifelong-ImageNet experiments}
\label{all-imagenet-models}
We use the following models (as named in the \texttt{timm}~\cite{rw2019timm} and \texttt{imagenet-testbed}~\cite{taori2020measuring} repositories): 
\setlength{\columnsep}{3pt} %
\begin{multicols}{3}
\scriptsize
\begin{enumerate}
\item \texttt{BiT-M-R101x3-ILSVRC2012}
\item \texttt{BiT-M-R50x1-ILSVRC2012}
\item \texttt{BiT-M-R50x3-ILSVRC2012}
\item \texttt{FixPNASNet}
\item \texttt{FixResNet50}
\item \texttt{FixResNet50CutMix}
\item \texttt{FixResNet50CutMix\_v2}
\item \texttt{FixResNet50\_no\_adaptation}
\item \texttt{FixResNet50\_v2}
\item \texttt{alexnet}
\item \texttt{alexnet\_lpf2}
\item \texttt{alexnet\_lpf3}
\item \texttt{alexnet\_lpf5}
\item \texttt{bninception}
\item \texttt{bninception-imagenet21k}
\item \texttt{cafferesnet101}
\item \texttt{densenet121}
\item \texttt{densenet121\_lpf2}
\item \texttt{densenet121\_lpf3}
\item \texttt{densenet121\_lpf5}
\item \texttt{densenet161}
\item \texttt{densenet169}
\item \texttt{densenet201}
\item \texttt{dpn107}
\item \texttt{dpn131}
\item \texttt{dpn68}
\item \texttt{dpn68b}
\item \texttt{dpn92}
\item \texttt{dpn98}
\item \texttt{efficientnet-b0}
\item \texttt{efficientnet-b0-autoaug}
\item \texttt{efficientnet-b1}
\item \texttt{efficientnet-b1-advprop-autoaug}
\item \texttt{efficientnet-b1-autoaug}
\item \texttt{efficientnet-b2}
\item \texttt{efficientnet-b2-advprop-autoaug}
\item \texttt{efficientnet-b2-autoaug}
\item \texttt{efficientnet-b3}
\item \texttt{efficientnet-b3-advprop-autoaug}
\item \texttt{efficientnet-b3-autoaug}
\item \texttt{efficientnet-b4}
\item \texttt{efficientnet-b4-advprop-autoaug}
\item \texttt{efficientnet-b4-autoaug}
\item \texttt{efficientnet-b5}
\item \texttt{efficientnet-b5-advprop-autoaug}
\item \texttt{efficientnet-b5-autoaug}
\item \texttt{efficientnet-b5-randaug}
\item \texttt{efficientnet-b6-advprop-autoaug}
\item \texttt{efficientnet-b6-autoaug}
\item \texttt{efficientnet-b7-advprop-autoaug}
\item \texttt{efficientnet-b7-autoaug}
\item \texttt{efficientnet-b7-randaug}
\item \texttt{efficientnet-b8-advprop-autoaug}
\item \texttt{fbresnet152}
\item \texttt{inceptionresnetv2}
\item \texttt{inceptionv3}
\item \texttt{inceptionv4}
\item \texttt{instagram-resnext101\_32x16d}
\item \texttt{instagram-resnext101\_32x32d}
\item \texttt{instagram-resnext101\_32x8d}
\item \texttt{mnasnet0\_5}
\item \texttt{mnasnet1\_0}
\item \texttt{mobilenet\_v2}
\item \texttt{mobilenet\_v2\_lpf3}
\item \texttt{mobilenet\_v2\_lpf5}
\item \texttt{nasnetalarge}
\item \texttt{nasnetamobile}
\item \texttt{polynet}
\item \texttt{resnet101}
\item \texttt{resnet101\_cutmix}
\item \texttt{resnet101\_lpf2}
\item \texttt{resnet101\_lpf3}
\item \texttt{resnet101\_lpf5}
\item \texttt{resnet152}
\item \texttt{resnet18}
\item \texttt{resnet18-rotation-nocrop\_40}
\item \texttt{resnet18-rotation-random\_30}
\item \texttt{resnet18-rotation-random\_40}
\item \texttt{resnet18-rotation-standard\_40}
\item \texttt{resnet18-rotation-worst10\_30}
\item \texttt{resnet18-rotation-worst10\_40}
\item \texttt{resnet18\_lpf2}
\item \texttt{resnet18\_lpf3}
\item \texttt{resnet18\_lpf5}
\item \texttt{resnet18\_ssl}
\item \texttt{resnet18\_swsl}
\item \texttt{resnet34}
\item \texttt{resnet34\_lpf2}
\item \texttt{resnet34\_lpf3}
\item \texttt{resnet34\_lpf5}
\item \texttt{resnet50}
\item \texttt{resnet50\_adv-train-free}
\item \texttt{resnet50\_augmix}
\item \texttt{resnet50\_aws\_baseline}
\item \texttt{resnet50\_cutmix}
\item \texttt{resnet50\_cutout}
\item \texttt{resnet50\_deepaugment}
\item \texttt{resnet50\_deepaugment\_augmix}
\item \texttt{resnet50\_feature\_cutmix}
\item \texttt{resnet50\_l2\_eps3\_robust}
\item \texttt{resnet50\_linf\_eps4\_robust}
\item \texttt{resnet50\_linf\_eps8\_robust}
\item \texttt{resnet50\_lpf2}
\item \texttt{resnet50\_lpf3}
\item \texttt{resnet50\_lpf5}
\item \texttt{resnet50\_mixup}
\item \texttt{resnet50\_ssl}
\item \texttt{resnet50\_swsl}
\item \texttt{resnet50\_trained\_on\_SIN}
\item \texttt{resnet50\_trained\_on\_SIN\_and\_IN}
\item \texttt{resnet50\_with\_brightness\_aws}
\item \texttt{resnet50\_with\_contrast\_aws}
\item \texttt{resnet50\_with\_defocus\_blur\_aws}
\item \texttt{resnet50\_with\_fog\_aws}
\item \texttt{resnet50\_with\_frost\_aws}
\item \texttt{resnet50\_with\_gaussian\_noise\_aws}
\item \texttt{resnet50\_with\_greyscale\_aws}
\item \texttt{resnet50\_with\_jpeg\_compression\_aws}
\item \texttt{resnet50\_with\_motion\_blur\_aws}
\item \texttt{resnet50\_with\_pixelate\_aws}
\item \texttt{resnet50\_with\_saturate\_aws}
\item \texttt{resnet50\_with\_spatter\_aws}
\item \texttt{resnet50\_with\_zoom\_blur\_aws}
\item \texttt{resnext101\_32x16d\_ssl}
\item \texttt{resnext101\_32x4d}
\item \texttt{resnext101\_32x4d\_ssl}
\item \texttt{resnext101\_32x4d\_swsl}
\item \texttt{resnext101\_32x8d}
\item \texttt{resnext101\_32x8d\_ssl}
\item \texttt{resnext101\_32x8d\_swsl}
\item \texttt{resnext101\_64x4d}
\item \texttt{resnext50\_32x4d}
\item \texttt{resnext50\_32x4d\_ssl}
\item \texttt{resnext50\_32x4d\_swsl}
\item \texttt{se\_resnet101}
\item \texttt{se\_resnet152}
\item \texttt{se\_resnet50}
\item \texttt{se\_resnext101\_32x4d}
\item \texttt{se\_resnext50\_32x4d}
\item \texttt{senet154}
\item \texttt{shufflenet\_v2\_x0\_5}
\item \texttt{shufflenet\_v2\_x1\_0}
\item \texttt{squeezenet1\_0}
\item \texttt{squeezenet1\_1}
\item \texttt{vgg11}
\item \texttt{vgg11\_bn}
\item \texttt{vgg13}
\item \texttt{vgg13\_bn}
\item \texttt{vgg16}
\item \texttt{vgg16\_bn}
\item \texttt{vgg16\_bn\_lpf2}
\item \texttt{vgg16\_bn\_lpf3}
\item \texttt{vgg16\_bn\_lpf5}
\item \texttt{vgg16\_lpf2}
\item \texttt{vgg16\_lpf3}
\item \texttt{vgg16\_lpf5}
\item \texttt{vgg19}
\item \texttt{vgg19\_bn}
\item \texttt{wide\_resnet101\_2}
\item \texttt{xception}
\item \tiny{\texttt{resnet50\_trained\_on\_SIN\_and\_IN\_then\_finetuned\_on\_IN}}
\item \tiny{\texttt{resnet50\_imagenet\_subsample\_1\_of\_16\_batch64\_original\_images}}
\item\tiny{ \texttt{resnet50\_imagenet\_subsample\_1\_of\_2\_batch64\_original\_images}}
\item\tiny{ \texttt{resnet50\_imagenet\_subsample\_1\_of\_32\_batch64\_original\_images}}
\item \tiny{\texttt{resnet50\_imagenet\_subsample\_1\_of\_8\_batch64\_original\_images}}
\item \fontsize{5pt}{0pt}{\texttt{{resnet50\_with\_gaussian\_noise\_contrast\_motion\_blur\_jpeg\_compression\_aws}}}
\item \tiny{\texttt{resnet50\_imagenet\_100percent\_batch64\_original\_images}}
\end{enumerate}
\end{multicols}

\end{document}